\DeclarePairedDelimiter\abs{\lvert}{\rvert}
\newcommand{\kernelf}{k}
\newcommand{\akernelf}{g}
\newcommand{\trsize}{n}
\newcommand{\drsize}{n_D}
\newcommand{\tasize}{n_T}
\newcommand{\algo}{\mathcal{A}}
\newcommand{\predictorvar}{F}
\newcommand{\drug}{d}
\newcommand{\targ}{t}
\newcommand{\Drug}{D}
\newcommand{\Targ}{T}
\newcommand{\predfun}{f}
\newcommand{\predfunDT}{\predfun_{\mathcal{\Drug},\mathcal{\Targ}}}
\newcommand{\predfunD}{\predfun_\mathcal{\Drug}}
\newcommand{\predfunT}{\predfun_\mathcal{\Targ}}
\newcommand{\Xkdkt}{\mathcal{X}^{\textnormal{\tiny IDIT}}}
\newcommand{\Xndkt}{\mathcal{X}^{\textnormal{\tiny ODIT}}}
\newcommand{\Xkdnt}{\mathcal{X}^{\textnormal{\tiny IDOT}}}
\newcommand{\Xndnt}{\mathcal{X}^{\textnormal{\tiny ODOT}}}
\newcommand{\testsetsize}{{\overline{\trsize}}}
\newcommand{\Cindex}{C-index\xspace}
\newcommand{\CDindex}{C$_{\Drug}$-index\xspace}
\newcommand{\CTindex}{C$_{\Targ}$-index\xspace}
\newcommand{\Aindex}{IC-index\xspace}
\newcommand{\Cutil}{C\xspace}
\newcommand{\CDutil}{C$_{\Drug}$\xspace}
\newcommand{\CTutil}{C$_{\Targ}$\xspace}
\newcommand{\Autil}{IC\xspace}
\newcommand{\subalign}[1]{%
  \vcenter{%
    \Let@ \restore@math@cr \default@tag
    \baselineskip\fontdimen10 \scriptfont\tw@
    \advance\baselineskip\fontdimen12 \scriptfont\tw@
    \lineskip\thr@@\fontdimen8 \scriptfont\thr@@
    \lineskiplimit\lineskip
    \ialign{\hfil$\m@th\scriptstyle##$&$\m@th\scriptstyle{}##$\hfil\crcr
      #1\crcr
    }%
  }%
}
\title{Interaction Concordance Index: Performance Evaluation for Interaction Prediction Methods}
\author{Tapio Pahikkala}
\author{Riikka Numminen} \author{Parisa Movahedi}
\author{Napsu Karmitsa}
\author{Antti Airola}
\affil{Department of Computing, University of Turku, Turku, Finland}
\newtheorem{definition}{Definition}
\newtheorem{proposition}{Proposition}
\newtheorem{example}{Example}
\newtheorem{remark}{Remark}
\begin{document}
\maketitle

\begin{abstract}
Consider two sets of entities and their members' mutual affinity values, say drug-target affinities (DTA). Drugs and targets are said to interact in their effects on DTAs if drug's effect on it depends on the target. Presence of interaction implies that assigning a drug to a target and another drug to another target does not provide the same aggregate DTA as the reversed assignment would provide. Accordingly, correctly capturing interactions enables better decision-making, for example, in allocation of limited numbers of drug doses to their best matching targets. Learning to predict DTAs is popularly done from either solely from known DTAs or together with side information on the entities, such as chemical structures of drugs and targets. In this paper, we introduce interaction directions' prediction performance estimator we call interaction concordance index (IC-index), for both fixed predictors and machine learning algorithms aimed for inferring them. IC-index complements the popularly used DTA prediction performance estimators by evaluating the ratio of correctly predicted directions of interaction effects in data. First, we show the invariance of IC-index on predictors unable to capture interactions. Secondly, we show that learning algorithm's permutation equivariance regarding drug and target identities implies its inability to capture interactions when either drug, target or both are unseen during training. In practical applications, this equivariance is remedied via incorporation of appropriate side information on drugs and targets. We make a comprehensive empirical evaluation over several biomedical interaction data sets with various state-of-the-art machine learning algorithms. The experiments demonstrate how different types of affinity strength prediction methods perform in terms of IC-index complementing existing prediction performance estimators.
\end{abstract}

\section{Introduction}\label{sec:intro}

Predicting the value of a quantity associated with a pair of entities is an ubiquitous task in biomedical applications. Typical examples include predicting the existence or magnitude for drug-target \citep{pahikkala2015realistic}, drug-drug \citep{vilar2014similarity}, protein-protein \citep{ben2005kernel}, or protein-RNA \citep{bellucci2011predicting} affinities. 
These are often cast as supervised machine learning problems, where from a training data of, say, drug-target pairs with known affinity values, a learning algorithm infers a predictor for pairs with unknown affinity values. These learning algorithms are either based on off-the-shelf implementations of standard classification or regression methods (see e.g. \citet{yu2012systematic}) or on methods specifically tailored to the task. The latter include pairwise kernel methods  \citep{ben2005kernel,viljanen2021generalized}, specialized deep learning architectures \citep{Ozturk2018deepdta, Nguyen2020graphdta}, and matrix factorization methods \citep{zheng2013collaborative}.

In what follows, 
we use a specific example of predicting the affinity strengths between drugs and their possible targets.
This task involves predicting either a binary value, indicating whether the drug is a good match for the target or not, or a real-valued bioactivity measurement between the drug and the target. However, the concepts discussed are not limited to this particular case, but can be applied to a wide range of tasks that involve predicting outcomes for pairs of objects. Representative examples include queries and documents in information retrieval \citep{liu2009learning} as well as customers and products in the context of recommender systems \citep{herlocker2004evaluating}.

Let us denote a drug by $\drug \in \mathcal{\Drug}$ and a target by $\targ \in \mathcal{\Targ}$, where $\mathcal{\Drug}$ and $\mathcal{\Targ}$ denote the sets of \textbf{categorical} identities of drugs and targets, respectively. The observed \textbf{drug-target affinity} (DTA) value between $\drug$ and $\targ$ can be considered as a random variable $Y$ endowed with some unknown distribution.
To inspect DTA values' dependence on drugs and targets, they are popularly expressed as additive decompositions of four distinct components we call here as the \textbf{grand mean} (i.e. the average affinity), \textbf{drug main effect}, \textbf{target main effect} and \textbf{interaction effect} (see e.g. \citet{vanderweele2014tutorial,Bours2021tutorial} and references therein).
As the name suggests, the grand mean simply indicates the mean affinity value, regardless of the drug or target. The drug and target main effects can be roughly interpreted as the drugwise and targetwise average DTA differences from the grand mean. Finally, the interaction effect can be considered as the affinity values departure from the sum of the three other components. We refer to the sign of the interaction effect as the \textbf{direction of interaction}.

As a related but somewhat orthogonal work, we note that instead of considering interaction in the above described additive scale, it is also popularly considered in other scales, especially  in multiplicative and odds scales (see e.g. \citet{vanderweele2014tutorial,Bours2021tutorial,Spake2023itdepends} and references therein).  To switch scale from additive to one of the alternatives, one can simply use an appropriate link function to transform the affinity values (see e.g. \citet{Ronkko2022eight}). For example, one can switch to multiplicative scale by taking the logarithm of the affinity values and to odds scale by applying the logistic function on them, after which one can continue with the additive interaction analysis on the transformed design. Therefore, in this paper we focus only on the additive scale for  its simplicity and intuitive convenience. However, it is worth noting that the presence, or even the direction of interaction, are not necessarily preserved when the scale is switched, as is pointed out by several authors in the literature \citep{vanderweele2014tutorial,Bours2021tutorial,Ronkko2022eight,Spake2023itdepends}.

Interaction effect can be interpreted in multiple different ways. The most straightforward one is the drug main effects' dependence on the target in the sense that DTA values can not be considered simply as the sums of the drugs' and targets' main effects \citep{Spake2023itdepends}. In the opposite case, in which the interaction effect is absent, drug $\drug$ having stronger affinity with target $\targ$ than drug $\drug'$ would imply that $\drug$ also has stronger affinity with target $\targ'$ than $\drug'$. Another popular point of view is what is in the literature referred to as the ``public health argument'' (see e.g. \citep{vanderweele2014tutorial}) concerning the aggregate benefits (or costs) of assigning $\drug$ to $\targ$ and $\drug'$ to $\targ^*$ compared to the opposite assignment. For example, if there is a limited number of doses of a better but more expensive drug whereas a cheap but worse drug is in abundance, more patients can be cured by assigning the former drug for the targets on which the drugs' effect difference is larger.

We now shift our focus on predictors, usually regression functions of either DTA values or probabilities of the affinity's existence in binary classification.
A predictor can, analogously to $Y$, be expressed as a decomposition
\begin{equation} \label{eqn:decomposition}
\predfun(\drug,\targ)
=\predfun_C+\predfunD(\drug)+\predfunT(\targ)+\predfunDT(\drug,\targ),
\end{equation}
where $\predfun_C$, $\predfunD$, $\predfunT$ and $\predfunDT$ are terms depending on neither drug nor target, only drug, only target and both drug and target arguments, respectively. With the same terminology as above, we say that $\predfun_C$, $\predfunD$, $\predfunT$ and $\predfunDT$ model the grand mean, drug main, target main and interaction effects, respectively.

Next we briefly recap, how popularly used prediction performance estimators, like the classification accuracy, area under the receiver operating characteristic curve (AUC) (see e.g. \citet{fawcett2006introduction}) and concordance index (\Cindex) \citep{harrell1996multivariable} as well as their drugwise and targetwise variations behave with predictors only consisting of either constant, drug main, target main or both main terms. 
\begin{itemize}
\item \textbf{Constant functions}. With binary valued $Y$ and imbalance between the two classes, one can obtain classification accuracy close to the expected $Y$ value by simply always predicting it for all drug-target pairs. These observations have motivated the recommendation to use ranking-based performance estimators such as AUC for binary \citep{schrynemackers2013protocols} and \Cindex for ordinal classification or regression \citep{pahikkala2015realistic} when benchmarking biological affinity prediction methods. For these estimators any constant function gives trivial 0.5 level performance.
\item \textbf{Target symmetric} or \textbf{drug symmetric}: Let  $\predfun(\drug,\targ)=\predfun_C+\predfunD(\drug)$ be a predictor modeling only the drug main effect, that is, it may indicate that some drugs tend to have greater affinity values than others. However, for any drug, it predicts the same DTA value for all targets, and hence we call it target symmetric. Analogously, predictors that can be expressed as $\predfun(\drug,\targ)=\predfun_C+\predfunT(\targ)$ model only target main effects. Some studies have recommended macro averaged drug-wise or target-wise ranking-based prediction performance estimators (see e.g. \citet{pahikkala2013efficient,stock2014identification,ezzat2019computational,dewulf2021cold}). The drug-wise estimators give the trivial 0.5 level \Cindex or AUC performance for predictors only modeling the drug main effects.
The same occurs for functions modeling only the target main effects with target-wise estimators.
\item \textbf{Both main effects}.
Let $\predfun(\drug,\targ)=\predfun_C+\predfunD(\drug)+\predfunT(\targ)$ be \textbf{additively separable} regarding drugs and targets, indicating it is able to represent both drug and target main effects but not interaction effects. Typical examples of machine learning algorithms inferring additively separable predictors are the ones that train linear models. Additionally, generalized linear models, such as logistic regression, are additively separable on the scale corresponding to their link functions \citep{Ronkko2022eight,Spake2023itdepends}.
Recently, \citet{viljanen2021generalized} experimentally demonstrated that additively separable models can, in several biomedical interaction prediction benchmarks, achieve highly competitive performance in terms of the \Cindex, even without capturing the interaction effect. However, the absence of interaction term still implies a severe limitation. If $\predfun(\drug,\targ)>\predfun(\drug',\targ)$ then $\predfun(\drug,\targ' )>\predfun(\drug',\targ')$ for all drugs and targets. Thus, when ordering drugs based on the predictions, the obtained ranking is the same despite the target, and there exists a ``universal'' drug predicted to be the best match for all targets.
\end{itemize}
Motivated by the above observations, we introduce a statistic we call \textbf{interaction concordance index} (\Aindex), usable for estimating the prediction performance of both DTA predictors and learning algorithms used for inferring the predictors. For a DTA predictor, \Aindex estimates the probability of correctly predicting the interactions' directions, given that they exist. Accordingly, predictors that do not model the interaction term, will always have a trivial 0.5 prediction performance.
For learning algorithms, we propose four variations of \Aindex that we elaborate below.

Based on the above, the inability to predict interaction effects is a problem for the classical (generalized) linear models, but may be less of a concern for more expressive methods, such as deep neural networks or random forests. However, we show that for certain classes of learning algorithms, the ability also strongly depends on whether the DTA values are predicted for such drugs and targets that have some observed DTA values in the training data or for those that have not. To make this consideration exact, we adopt the concept known as the off-training-set (OTS) prediction performance \citep{wolpert1992connection}. It indicates how well a predictor performs on data, whose inputs are distinct from those of the training data it has been inferred from. This is in contrast to the more well-known concept of generalization performance that makes no such restriction, but is rather the expected performance over the same distribution from which the training data is drawn (see e.g. \citet{roos2005generalization}). In DTA prediction, in-training-set (ITS) data are the drug-target pairs for which there are observed DTA values in the training data, and all other drug-target pairs form the OTS data. Moreover, we say that \textbf{ITS drugs} are the ones associated with at least one observed DTA value with any target in the training data, and the rest are \textbf{OTS drugs}. The \textbf{ITS targets} and \textbf{OTS targets} are defined analogously. 
Accordingly, we consider the following subsequent partition of the OTS data \citep{pahikkala2015realistic}, that are illustrated in Figure~\ref{fig:sideinfo}. Namely, OTS drug-target pairs formed from:
\begin{itemize}
\item \textbf{in-training-set drugs and in-training-set targets} (IDIT),
\item \textbf{off-training-set drugs and in-training-set targets} (ODIT),
\item \textbf{in-training-set drugs and off-training-set targets} (IDOT),
\item \textbf{off-training-set drugs and off-training-set targets} (ODOT).
\end{itemize}
We propose four variations of IC-index for learning algorithms. Namely those that estimate their DTA prediction performance on IDIT, ODIT, IDOT and ODOT drug-target pairs.

As a related work, we also note that multiple studies have established evaluation protocols based on distinct cross-validation settings, depending on whether affinities are imputed between entities already present in the training set or whether generalization to pairs including at least one novel entity is required \citep{park2012flaws,schrynemackers2013protocols,heimonen2014properties,pahikkala2015realistic,xian2018zero,celebi2019evaluation,mathai2020validation, stock2020algebraic,dewulf2021cold}.

\begin{figure}[h!]
\centering
\begin{tcolorbox}
\includegraphics[width=1\linewidth]{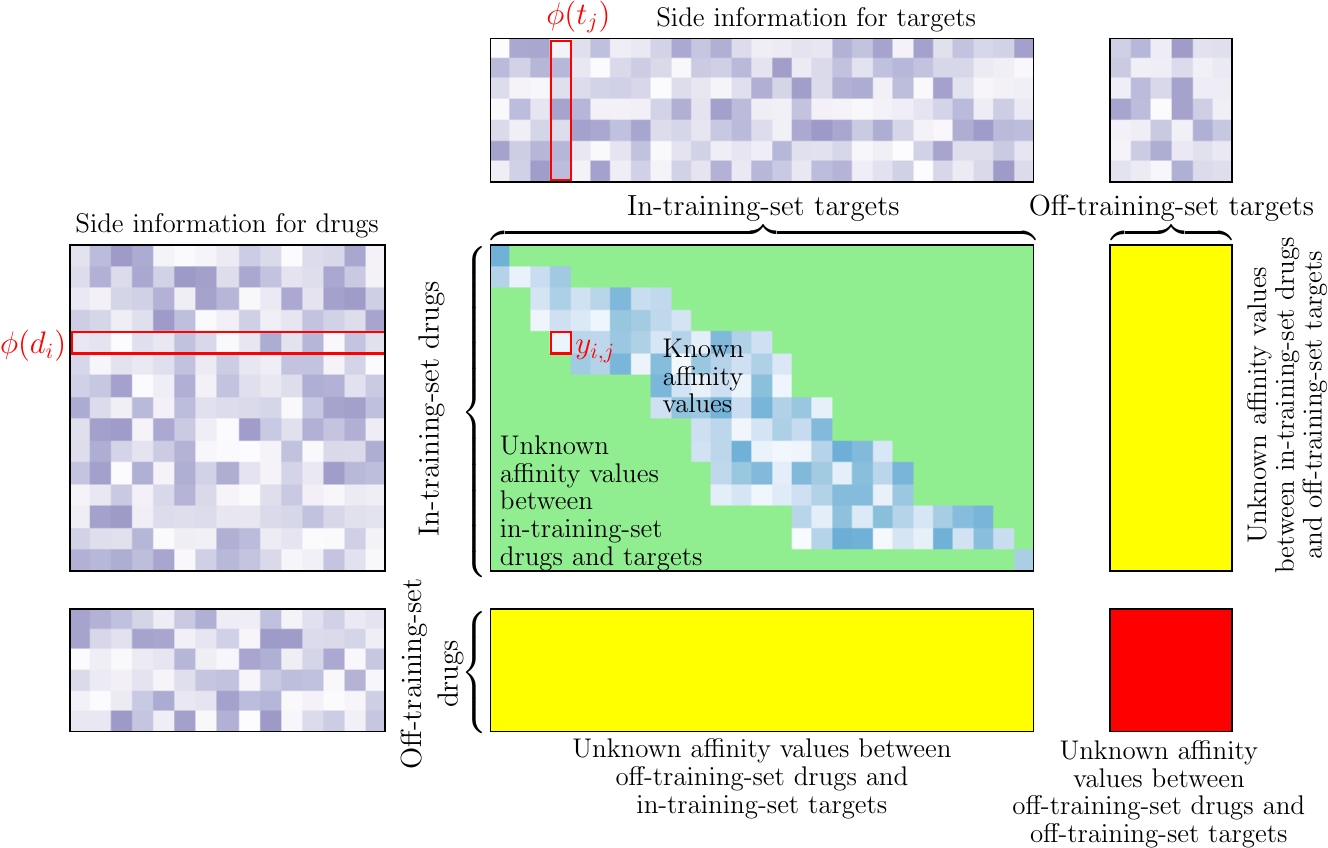}
\end{tcolorbox}
\caption{The figure illustrates the partition of drug-target pairs based on their components presence in the training data. The blue colored squares represent the in-training-set drug-target pairs having observed affinity values in the training data. The green area represent the off-training-set pair composed of in-training-set drugs and targets. The two yellow colored blocks represent pairs composed of either in-training-set drugs and off-training-set targets or vice versa. The red colored pairs are composed of off-training-set drugs and targets. The grey-colored blocks represent side information associated with either drugs or targets. For example, the row marked with $\phi(\drug_i)$ can be a feature representation associated to the drug $\drug_i$.
}
\label{fig:sideinfo}
\end{figure}

With the above described partition, we can analyze more in detail how the ability to predict interaction effects depends on the learning algorithms' access to what is commonly referred to as \textbf{side information} on drugs and targets a priori to training. From the theoretical perspective, side information or the lack of it can be considered to form a part of learning algorithms' inductive bias. We call a learning algorithm \textbf{permutation equivariant} with respect to drug identities, if it has no information on drugs other than their distinct categorical drug identities encoded into its inductive bias a priori to training. Permutation equivariance with respect to target identities is considered analogously.

A learning algorithm is said to be deterministic if it infers the same predictor every time it is rerun on the same training data. Such algorithm's permutation equivariance indicates that exchanging the identities of drugs $\drug$ and $\drug'$ in the training data results to the same identity exchange on the predictor learned from it. The identity exchange in the training data is interpreted as, for any target $\targ$, any DTA value observation for $(\drug,\targ)$ becoming that for $(\drug',\targ)$ and vice versa. For the predictor, the DTA value predicted for the pair $(\drug,\targ)$ becomes that for $(\drug',\targ)$ and vice versa. Then, the DTA predictions for $(\drug,\targ)$ and $(\drug',\targ)$ must be equal for OTS drugs, because the identity exchange has no effect in the training data, a property sometimes referred to as the principle of indifference. Conversely, any differences between OTS drugs' predicted DTA values implies that the learning algorithm is not permutation equivariant in this sense. Some information on these drugs differences beyond their distinct categorical identities must be encoded into the learning algorithm's inductive bias.

If a randomized learning algorithm (see e.g. \citet{elisseeff2005stabrandomized} and \citet{Oneto2020randomized}) is permutation equivariant, the drug or target identity exchanges are inherited by the distribution of predictors possibly learned from it. We show that the drug and target permutation equivariance of a randomized learning algorithm implies that its expected interaction directions' prediction performance is at the level 0.5 of random guessing except for the IDIT data. A predictor inferred by such a learning algorithm may be able to capture interactions also for the IDOT, ODIT and ODOT data by chance. Indeed, we demonstrate in our experiments that deceptively optimistic test results can sometimes emerge due to the randomness if test data is not sufficiently large.

As a representative example on how side information remedies the permutation equivariance, we consider matrix factorization based learning algorithms.
\begin{example}
Matrix factorization methods are one of the standard approaches for training recommender systems \citep{koren2009matrix}, but they are popularly applied in biomedical DTA prediction tasks as well. The methods are based on computing a low-rank factorization of a matrix whose rows and columns correspond to drugs and targets, respectively, and entries representing the DTA values, some of which are known and some unknown. DTA predictions for drug-target pairs $(\drug,\targ)$ are computed as $\predfun(\drug,\targ) = \langle\bm{v}_d,\bm{v}_t\rangle + b_d + b_t + b$, where $\bm{v}_d$ and $\bm{v}_t$ are embeddings of the drugs and the targets, $b_d$ and $b_t$ are drug and target dependent intercept terms, and $b$ is a global intercept \citep{koren2009matrix}. The first term allows capturing interaction effects in the IDIT area  (see Figure \ref{fig:sideinfo}). However, for the other areas, these values cannot be inferred from the known DTA values only. Some implementations may set $\bm{v}_d=\bm{0}$ and $b_d=0$, leading to a predictor $\predfun(\drug,\targ)= b_t + b$ only representing the target main effect on new drugs. Alternatively, implementations may instead resort to random values of $\bm{v}_d$ and $b_d$. Although the latter approach has a chance for random success, the expected interaction prediction performance remains at a random level. An analogous situation occurs when trying to generalize to novel targets. Furthermore, if both the drug and the target are outside the training set, only the global constant $b$ remains. However, if side information that differentiates the off-training-set drugs is available as per Figure \ref{fig:sideinfo}, then different embeddings and bias terms can be inferred for them.
\end{example}

The main contributions of the paper are as follows:
\begin{itemize}
\item We introduce interaction directions' prediction performance estimator for both fixed predictors and learning algorithms for inferring them that we call interaction concordance index (\Aindex).
\Aindex is invariant to both drug and target main effects, and hence only measures how well the interaction effects' directions are captured.
\item We review a representative suite of prediction performance estimators used for evaluating affinity prediction methods. Namely, classification accuracy and C-index as well as its drug- and targetwise (i.e. macro averaged) variations. Their invariance properties with respect to constant, drug symmetric and target symmetric as well as additively separable prediction functions are analyzed, and it is shown how the IC-index complements them.
\item We present rigorous definitions of four different subtypes of OTS affinity prediction learning problems, namely learning to predict affinities for IDIT, ODIT, IDOT and ODOT drug-target pairs. 
Learning algorithms' expected prediction performances on these problems admit representations as conditional expectations over the distribution of data, the condition corresponding to either IDIT, ODIT, IDOT or ODOT. We propose well-defined estimators for these quantities based on cross-validation, or more precisely repeated hold-out techniques.
\item 
We show that without having access to any side information on drugs or targets, no learning algorithm has better than random level expected IC-index on ODIT, IDOT or ODOT drug-target pairs. That is, better than random level expected IC-index is possible only on IDIT drug-target pairs. Consequently, all seemingly better than random IC-index estimates on other than IDIT pairs must be either due to random change or unobserved use of side information.
\item We make a comprehensive empirical evaluation over seven biomedical interaction data sets and 11 machine learning algorithms, demonstrating how different types of learning methods behave with respect to both the novel \Aindex and the other considered affinity prediction performance estimators.\footnote{Instructions for repeating the results are available at \url{github.com/TurkuML/IC-index-experiments}.}
\item We offer an open-source implementation of a binary search-tree-based algorithm for efficient computation of \Aindex.\footnote{Available at \url{github.com/TurkuML/Interaction-Concordance-Index}.}
\end{itemize}

\section{Preliminaries}\label{secPreli}
We start by introducing the notation and defining the most central concepts used in the paper in Section~\ref{secNotation}. Next, we define a number of utility functions that can be used to measure the performance of prediction functions or learning algorithms in Section \ref{secUtility}, and define the corresponding estimands and estimators in Sections~\ref{secEstimand}  and~\ref{secEstimator}.

\subsection{Notation}\label{secNotation}
We first give the main notations and definitions used in this paper. 
The notations used throughout the paper are also listed in Table~\ref{table:notations}.
\begin{table}[h!t]
{\small
\centering
 %\resizebox{0.99\textwidth}{!}{
\begin{tabular}{l l}
\hline %
\hline \\ [-0.9ex]
% acronyms
%DTI & drug-target interactions \\
$\mathcal{\Drug}$ & set of categorical drug identities\\
$\mathcal{\Targ}$ & set of categorical target identities\\
$\mathcal{X} = \mathcal{\Drug} \times \mathcal{\Targ}$ & set of inputs, i.e., set of drug-target pairs\\
$\mathcal{Z} = \mathcal{\Drug} \times \mathcal{\Targ} \times \mathbb{R}$ & set of drug-target pairs and their affinity values\\
$\drug \in \mathcal{D},\, \targ \in \mathcal{T}$ & drug identity, target identity \\
$x=(\drug,\targ) \in \mathcal{X}$ & input, drug-target pair \\
${y} \in \mathbb{R}$ & drug-target affinity value\\
$z=(\drug, \targ, y) \in \mathcal{Z}$ & datum, i.e. drug-target pair and its affinity value\\
$Z=(\Drug,\Targ,Y)$ & $\mathcal{Z}$-valued random variable\\
$\operatorname{P}_Z\in\mathcal{P}$ & probability distribution $Z$ is endowed with\\
$\mathcal{P}$ & collection of all probability distributions of data\\
$\bm{z}\in \mathcal{Z}^{\abs{\bm{z}}}$ & sequence of data of length $\abs{\bm{z}}$\\
$\operatorname{P}_{\bm{Z}}$ &degree $\abs{\bm{Z}}$ product probability distribution\\
$\mathcal{D}_{\bm{z}}$, $\mathcal{T}_{\bm{z}}$, $\mathcal{X}_{\bm{z}}$, $\mathcal{Z}_{\bm{z}}$ & subsets of drugs, targets, drug-target pairs, and data in $\bm{z}$\\
$\Xkdkt_{\bm{z}}, \ldots , \Xndnt_{\bm{z}}$& subsets of drug-target pairs not in $\mathcal{X}_{\bm{z}}$\\ 
%$\Ckdkt, \ldots , \Cndnt$& sets of constraints\\
$\mathbb{R}^\mathcal{X}$ & set of all predictor functions from $\mathcal{X}$ to $\mathbb{R}$\\
$\predfun:\mathcal{X}\rightarrow \mathbb{R}$ & predictor function\\
$\algo$& learning algorithm\\
$\predfun_{\bm{z}}\in\mathbb{R}^\mathcal{X}$ & predictor learned from $\bm{z}$ by $\algo$ \\
$\predictorvar_{\bm{z}}$ & $\mathbb{R}^\mathcal{X}$-valued random element learned from $\bm{z}$ by $\algo$ \\
$\kernelf:\mathcal{R}\rightarrow\left\{0,1/2,1\right\}$%,\, \kernelf_\predfun$
 & (inner) utility function for predictor $\predfun$ \\
$\mathcal{R}\subseteq\mathcal{Z}^{\abs{\kernelf}}$ & domain of $\kernelf$ \\
$\abs{\kernelf}$ & degree of $\kernelf$\\
$\akernelf:\mathcal{C}\rightarrow\left\{0,1/2,1\right\}$%,\, \akernelf_\algo$
 & (outer) utility function for learning algorithm $\algo$ \\
$\mathcal{C}\subseteq\mathcal{Z}^{\abs{\kernelf}}$ & domain of $\akernelf$\\
$\abs{\akernelf}$ & degree of $\akernelf$\\
$\theta$ &distribution level utility, the estimand\\
$\widehat{\theta}$& estimator of $\theta$\\
$\bm{s} \in \mathcal{Z}^{\abs{\bm{s}}}$ & sample of data of length $\abs{\bm{s}}$\\
$\sigma$ & injection from $\{1,\ldots,\abs{\kernelf}\}$ to $\{1,\ldots,\abs{\bm{s}}\}$\\
$\sigma\cdot\bm{s}$& sequence of entries of $\bm{s}$ from $\abs{\kernelf}$ distinct positions determined by $\sigma$\\
$H(a)$& Heaviside function with parameter $a$\\
$\operatorname{E}$ & expectation operator\\
$\trsize$ & size of the training data \\
$\drsize, \, \tasize$  & numbers of unique drugs and targets \\
$G%(\cdot,\cdot)
:\operatorname{P}_{Z},\algo\mapsto\theta$&collection of learning problems associated with utility $\akernelf$\\
$\operatorname{\Pi}_{\mathcal{D}}$, $\operatorname{\Pi}_{\mathcal{T}}$ & finitary symmetric groups on drug and target identities\\
\hline
%\hline %
\hline \\ [-0.9ex]
\end{tabular}}
\caption{Notations.} 
%\caption{Acronyms and notations.}
\label{table:notations}
%}
\end{table}
In what follows, $X = (\Drug, \Targ)$ denotes the bi-variate random variable for a single drug-target pair. Here, $\Drug$ and $\Targ$ are \textbf{categorical} (i.e., their values are from a set without any known order or structure) random variables with values in the sets $\mathcal{D}$ and $\mathcal{T}$, respectively.
We denote the realizations of these random variables as $x = (\drug, \targ)$. The \textbf{drug-target affinity} (DTA) value is denoted with a random variable $Y$ with realization $y$, whose values are real or ordinal. In addition, we denote the tri-variate random variable consisting of a drug-target pair and its DTA value as $Z=(\Drug,\Targ, Y)$ and its realizations as $z=(\drug, \targ, y)$. The variable $Z$ is endowed with an unknown joint probability distribution $\operatorname{P}_Z\in\mathcal{P}$, where $\mathcal{P}$ denotes the collection of all probability distributions of data. We also use the symbols $\mathcal{X}=\mathcal{D}\times\mathcal{T}$ and $\mathcal{Z}=\mathcal{D}\times\mathcal{T}\times\mathbb{R}$ to represent the sets of values for $X$ and $Z$, respectively. 
For sequences of data or random variables, we use bold notation. For example, $\bm{z}\in\mathcal{Z}^{\abs{\bm{z}}}$ denotes a sequence of data of length $\abs{\bm{z}}$ and $\operatorname{P}_{\bm{Z}}$ denotes the probability distribution for the sequences of data drawn independently from $\operatorname{P}_{Z}$.
\begin{remark}The independence assumption rarely holds in practical applications, and with DTA prediction learning problems, it tends to be violated even more often. Namely, each drug $\drug$ in a sample tends to be encountered as part of several data, for example, as a part of pair $x = (\drug,\targ)$ and $x'=(\drug,\targ')$, and the same applies to targets. This phenomenon can be seen in Figure~\ref{fig:sideinfo}, in which the sample contains only a single datum whose drug and target are not parts of any other datum in the sample.
\end{remark}

Let
\begin{align*}
&\predfun: \,\mathcal{X} \rightarrow \mathbb{R}
\end{align*}
be a function that maps a drug-target pair to a real-valued prediction of their DTA. In what follows, we refer to $\predfun$ as a \textbf{predictor}.

Further, for any sequence $\bm{z}$ of data, we use the following notation for subsets of drugs, targets, drug-target pairs and data:
\begin{alignat*}{5}
	&\mathcal{D}_{\bm{z}}
	&&=\{\drug\mid z_i=(\drug,\targ,y)\textnormal{ for some }1\leq i\leq\arrowvert\bm{z}\arrowvert,\targ\in\mathcal{T}\textnormal{ and }y\in\mathbb{R}\}\\
	&\mathcal{T}_{\bm{z}}&&=\{\targ\mid z_i=(\drug,\targ,y)\textnormal{ for some }1\leq i\leq\arrowvert\bm{z}\arrowvert,\drug\in\mathcal{D}\textnormal{ and }y\in\mathbb{R}\}\\
&\mathcal{X}_{\bm{z}}&&=\{(\drug,\targ)\mid z_i=(\drug,\targ,y)\textnormal{ for some }1\leq i\leq\arrowvert\bm{z}\arrowvert\textnormal{ and }y\in\mathbb{R}\}
\\
&\mathcal{Z}_{\bm{z}}&&=\{(\drug,\targ,y)\mid z_i=(\drug,\targ,y)\textnormal{ for some }1\leq i\leq\arrowvert\bm{z}\arrowvert\}\;.
\end{alignat*}
That is, $\mathcal{D}_{\bm{z}}\subseteq\mathcal{D}$, $\mathcal{T}_{\bm{z}}\subseteq\mathcal{T}$, $\mathcal{X}_{\bm{z}}\subseteq\mathcal{X}$, and $\mathcal{Z}_{\bm{z}}\subseteq\mathcal{Z}$ denote, respectively, the sets of drugs, targets, drug-target pairs, and data that occur at least once in the sequence $\bm{z}$.

When $\bm{z}$ is training data for a learning algorithm, we denote with $\mathcal{X}_{\bm{z}}$ the set of ITS drug-target pairs and with its complement $\mathcal{X}\setminus \mathcal{X}_{\bm{z}}$ the set of OTS drug-target pairs. The latter is further divided along the following partition:
\begin{definition}[Partition of off-training-set drug-target pairs]\label{def:otspartition}
Let us denote the four disjoint subsets of the OTS drug-target pairs $\mathcal{X}\setminus \mathcal{X}_{\bm{z}}$ as
\begin{align*}
\begin{aligned}
&\Xkdkt_{\bm{z}} &&=(\mathcal{D}_{\bm{z}}\times\mathcal{T}_{\bm{z}})\setminus\mathcal{X}_{\bm{z}}  &&\phantom{WW}\Xkdnt_{\bm{z}} &&=\mathcal{D}_{\bm{z}}\times(\mathcal{T}\setminus\mathcal{T}_{\bm{z}})\\[2ex]
&\Xndkt_{\bm{z}}&&=(\mathcal{D}\setminus\mathcal{D}_{\bm{z}})\times\mathcal{T}_{\bm{z}} &&\phantom{WW}\Xndnt_{\bm{z}}&&=(\mathcal{D}\setminus\mathcal{D}_{\bm{z}})\times(\mathcal{T}\setminus\mathcal{T}_{\bm{z}})
\end{aligned}
\end{align*}
that we refer to IDIT, IDOT
ODIT and ODOT drug-target pairs, respectively.
\end{definition}
\noindent These subsets are illustrated in Figure~\ref{fig:sideinfo}. They also coincide with the four experimental settings considered by \citet{pahikkala2015realistic}. 

Next we revise the definitions for deterministic and randomized learning algorithms and define a general form of a utility function that is used as basis for the specialized ones below. For learning algorithms, we use the following definition  (see e.g. \citet{elisseeff2005stabrandomized,Oneto2020randomized} and references therein for more in depth treatment of randomized learning algorithms):
\begin{definition}[Learning algorithm]\label{algodef}
A \textbf{deterministic} learning algorithm is considered as the mapping
\begin{alignat*}{3}
\begin{array}{lllll}
\algo:&\bigcup_{\arrowvert\bm{z}\arrowvert\in\mathbb{N}}\mathcal{Z}^{\arrowvert\bm{z}\arrowvert}&\rightarrow&\mathbb{R}^\mathcal{X}\smallskip\\ 
&\algo\left(
\bm{z}\right)&\mapsto&\predfun_{\bm{z}}
\end{array}\;,
\end{alignat*}
where $\mathbb{R}^\mathcal{X}$ denotes the set of all possible functions from $\mathcal{X}$ to $\mathbb{R}$.
With the subscripts in $\predfun_{\bm{z}}$, we stress that the predictor is inferred from the sequence $\bm{z}$ of training data. A \textbf{randomized} learning algorithm may infer different functions from the same training data if the learning process is repeated. To encapsulate this randomness, we use a $\mathbb{R}^\mathcal{X}$-valued random element $\predictorvar_{\bm{z}}$ in place of $\predfun_{\bm{z}}$, endowed with a probability distribution
\begin{align*}
\operatorname{P}[\predictorvar_{\bm{z}}\in\mathcal{F}]
\end{align*}
where $\mathcal{F}\subseteq\mathbb{R}^{\mathcal{X}}$ is any measurable subset of predictors.
Deterministic algorithms can obviously be considered as special cases of the randomized ones with the distribution's support only consisting of a single predictor.
\end{definition}

\begin{definition}[Utility function]\label{genericutility}
Utility functions are denoted by symbols $\kernelf$ and $\akernelf$ to indicate the utility value of a fixed predictor and that of a learning algorithm, in the respective order. In addition, their domains are denoted by symbols $\mathcal{R}$ and $\mathcal{C}$, respectively.
We may use the curry notation $\kernelf_\predfun$ to stress that the utility of a fixed predictor $\predfun$ under consideration is evaluated on the data arguments.
Similarly, the notation $\akernelf_\algo$ can be used to stress that the utility of a fixed learning algorithm $\algo$ is evaluated. However, both $\predfun$ and $\algo$ can also be considered as additional free arguments for $\kernelf$ and $\akernelf$, respectively, as we do in some of the forthcoming definitions. In this paper, all utilities are functions of the form
\begin{align*}
\kernelf:\mathcal{R}&\rightarrow
\left\{0,\frac{1}{2},1\right\}
\textnormal{ with }\mathcal{R}\subseteq\mathcal{Z}^{\abs{\kernelf}}\;,
\end{align*}
whose domain $\mathcal{R}$ consists of a subset of data sequences of length $\abs{\kernelf}$. In what follows, we refer to $\abs{\kernelf}$ as the \textbf{degree} and $\mathcal{R}$ as the \textbf{restriction} of the utility.
%The utility values 1, 0.5 and 0 can be interpreted simply as answers of type ``yes'', ``borderline'' and ``no''.
%If the utility of $\predfun$ is $\kernelf_\predfun(\bm{z})$, the utility of $-\predfun$ is $1-\kernelf_\predfun(\bm{z})$ for all $\bm{z}\in\mathcal{R}$. 
\end{definition}

\begin{remark}[Default utility value 0.5]
All utilities considered in this paper have their default value 0.5, interpreted in the following sense. Their domains have ``as many'' members for which the utility value is 0 as those for which it is 1 (omitting the unnecessarily cumbersome technical details concerning infinite and uncountable domains). Consequently, for any value $\theta\in[0, 1]$, there are ``as many'' probability distributions $\operatorname{P}_Z(z)\in\mathcal{P}$ for which the expected utility value is $\theta$ as those for which it is $1-\theta$.
\end{remark}

Lastly, the well-known Heaviside function
\begin{align}\label{heavisidefun}
H\left(a\right) = \begin{cases} 0 \quad\text{if } a < 0\\ \frac{1}{2} \quad\text{if } a = 0 \\ 1 \quad\,\text{if } a > 0\end{cases}
\end{align}
is used as the main component of defining both the classical utility function formalizations---namely those for binary classification and rank concordance---as well as for interaction concordance.

\subsection{Utility Functions}\label{secUtility}
We now define some utility functions relevant for our analysis. The first is what we call \textbf{binary classification utility}:
\begin{definition}[Binary classification utility]\label{def:accuracy}
Let $z=(x,y)$ and $\predfun$ be a predictor. The binary classification utility function
\[
\kernelf^{\textnormal{Acc}}_\predfun(z)=H(y\predfun(x)).
\]
indicates whether the sign of $y$ is correctly predicted by the value of $\predfun$ on $x$.
\end{definition}\smallskip
\noindent This utility can be used with binary valued, say $-1$ or $+1$, or continuous DTA values.
In the latter case, only the DTA values' sign is accounted for. %Zero is considered as a threshold value for continuous values or non-response prediction for binary labels.

The second utility, what we call here rank concordance or simply as \textbf{concordance}, determines whether the order of the predicted DTA values of two data points matches that of the observed ones. 
Suppose we have affinity observations for two drug-target pairs, say $z=(\drug,\targ,y)$ and $z'^*=(\drug',\targ^*,y'^*)$, such that $y>y'^*$. Concordance indicates whether the predicted DTA value $\predfun(x)$ for the first drug-target pair $x=(\drug,\targ)$ is greater than the predicted  DTA value $\predfun(x'^*)$ for the second pair $x'^*=(\drug',\targ^*)$.
\begin{definition}[Concordance]\label{concordanceindicatordef}
Concordance is expressed as the utility function
\begin{align}\label{concordanceutility}
\kernelf^{\textnormal{\Cutil}}_\predfun(z,z'^*)=H\left(f(x)-f(x'^*)\right),\;
\end{align}
restricted to the set
\begin{align}\label{concordancerestriction}
\mathcal{R}^{\textnormal{\Cutil}}=\{(z,z'^*)\in\mathcal{Z}^2\mid y>y'^*\}
\end{align}
consisting of all possible pairs of data, such that the  DTA value of the former datum is greater than that of the latter.
\end{definition}
\noindent In the literature (see e.g. \citet{Newson2002parameters}), concordance is often defined without the restriction \eqref{concordancerestriction} and scaled between -1 and 1. Here, we resort to the above definition for conformity with the other considered utilities.

Finally, we define what we call \textbf{drugwise} and \textbf{targetwise concordances}:
\begin{definition}[Drugwise and targetwise concordance]\label{drugntargetwisecidef}
For drugwise and targetwise concordances, the domain of the concordance utility $\kernelf_\predfun$ as per Definition \ref{concordanceutility} is further restricted to two data associated with the same drug for drugwise concordance
\[
\mathcal{R}^{\textnormal{\CDutil}}=\{(z,z')\in\mathcal{Z}^2\mid y>y', \drug=\drug'\},
\]
and with the same target for targetwise concordance
\[
\mathcal{R}^{\textnormal{\CTutil}}=\{(z,z^*)\in\mathcal{Z}^2\mid y>y^*, \targ=\targ^*\}.
\]
\end{definition}

\subsection{Distribution Level Utility}\label{secEstimand}

Based on the above-presented utilities on data, we now define the corresponding quantities on the underlying distributions of data. We refer to this type of quantity as the \textbf{distribution level utility} or simply as the \textbf{estimand}, since it can be considered as the aim of prediction performance estimation. As a simple rule of thumb, these quantities can be expressed as conditional expectations of utility functions, whose conditions correspond to the restrictions on the utility functions' domains.
\begin{definition}[Estimand]
\label{genericestimand}
Let $\kernelf$ be a utility function of degree $\arrowvert\kernelf\arrowvert$ and let $\mathcal{R}\subseteq\mathcal{Z}^{\arrowvert\kernelf\arrowvert}$ be its domain. Moreover, let $\mathcal{P}$ be a collection of probability distributions on data such that $\operatorname{P}_{\bm{Z}}[\mathcal{R}]>0$ for all $\operatorname{P}_Z\in\mathcal{P}$, where $\operatorname{P}_{\bm{Z}}$ denotes the degree $\arrowvert\kernelf\arrowvert$ product probability distribution. Then, the distribution level utility is
\begin{align*}
\theta
&=\operatorname{E}
[\kernelf(\bm{Z})\mid\bm{Z}\in\mathcal{R}]\;,
\end{align*}
where the expectation is taken over $\operatorname{P}_Z$.
\end{definition}
The next example presents what we call the \textbf{distribution level concordance}. In the literature, it is sometimes referred to as the distribution level Somer's D correlation if scaled between -1 and 1 (see e.g. \citep{Newson2002parameters}).
If the affinity values are binary, $\theta$ is sometimes called the distribution AUC, population AUC \citep{fawcett2006introduction}% the probability of superiority \citep{DeNeve2017MWinteraction},
or the Mann-Whitney parameter of the distribution \citep{fay2018confidence}, among many other names.
\begin{example}
[Distribution concordance]
%[Concordance probability]
\label{concordanceexample2}
For the concordance utility as per Definition~\ref{concordanceindicatordef}, the estimand becomes
\begin{align*}
\theta^{\textnormal{\Cutil}}_\predfun&=\operatorname{E}_{Z,Z'^*}\left[H\left(\predfun(X)-\predfun(X'^*)\right)\mid Y>Y'^*\right]\;,
\end{align*}
that can be referred to as the distribution level concordance.
\end{example}
For the binary classification utility and drugwise and targetwise concordances, similar procedures apply.

\subsection{Utility Estimation from a Sample of Data}\label{secEstimator}
Now, assume that we have access to a sequence $\bm{s}\in \mathcal{Z}^{\abs{\bm{s}}}$ of $\abs{\bm{s}}$ data drawn independently from $\operatorname{P}_Z$.
The following definition presents what we call simply as \textbf{estimator} for any of the estimands $\theta$ as per Definition~\ref{genericestimand}. We note that while one can consider any real-valued function of the sample as an estimator of $\theta$, in this paper we only focus on this type.
\begin{definition}[Estimator]\label{genericestimatordef}
Let $\kernelf$ be a utility function of degree $\arrowvert\kernelf\arrowvert$ and $\mathcal{R}\subseteq\mathcal{Z}^{\arrowvert\kernelf\arrowvert}$ its restriction. In addition, let $\bm{s}\in \mathcal{Z}^{\abs{\bm{s}}}$ be an IID sample of data drawn from the unknown distribution $\operatorname{P}_Z$.
With $\sigma=(i_1,\ldots,i_{\abs{\kernelf}})$ we denote any sequence of $\abs{\kernelf}$ distinct integers selected from $\{1,\ldots,\abs{\bm{s}}\}$ without repetition. In other words, the index $i_j$ is the image of $j$ on an arbitrary injection from $\{1,\ldots,\abs{\kernelf}\}$ to $\{1,\ldots,\abs{\bm{s}}\}$. Moreover, with $\sigma\cdot\bm{s}\in \mathcal{Z}^{\abs{\kernelf}}$, we denote a sequence of data consisting of the entries of $\bm{s}$ from the $\abs{\kernelf}$ distinct positions determined by $\sigma$. 
We consider estimators, whose values on a sample of data are
\begin{align}\label{genericestimator}
\widehat{\theta}(\bm{s})
&=\abs{\mathcal{I}}^{-1}
\sum_{\sigma\in\mathcal{I}}\kernelf(\sigma\cdot\bm{s})
\end{align}
if $\abs{\mathcal{I}}>0$ and $\widehat{\theta}(\bm{s})=0.5$ otherwise, where
\[
\mathcal{I}=\{\sigma\mid\sigma\cdot\bm{s}\in\mathcal{R}\}
\]
denotes the set of all index sequences, such that the corresponding data sequences $\sigma\cdot\bm{s}$ are conformable with the restriction $\mathcal{R}$.
\end{definition}
To concretize this abstract and generic definition with a practical example, we present the well-known \textbf{concordance index} (\Cindex), an estimator of the distribution level concordance as per Example~\ref{concordanceexample2}. \Cindex is also known as the Somers' D statistic when scaled between -1 and 1, (see e.g. \citet{Newson2002parameters}). If the DTA values are binary, C-index reduces to AUC, also known as the Mann-Whitney statistic (see e.g. \citet{fawcett2006introduction}).
\begin{example}[\Cindex and AUC]\label{cindexandaucexample}
An estimator $\widehat{\theta}^{\textnormal{\Cutil}}_{\predfun}(\bm{s})$ of the estimand $\theta^{\textnormal{\Cutil}}_\predfun$ as per Example~\ref{concordanceexample2} is obtained by substituting $\kernelf^{\textnormal{\Cutil}}_\predfun$ from (\ref{concordanceutility}) and $\mathcal{R}^{\textnormal{\Cutil}}$ from (\ref{concordancerestriction}) into (\ref{genericestimator}):
\[
\widehat{\theta}^{\textnormal{\Cindex}}_{\predfun}(\bm{s})
=\abs{\mathcal{I}}^{-1}\sum_{(i,j)\in\mathcal{I}}H(\predfun(x_i)-\predfun(x_j))\;,
\]
where
\[
\mathcal{I}=\{(i,j)\mid y_i>y_j\}\;.
\] 
\end{example}

If we adopt either the drug-wise $\mathcal{R}^{\textnormal{\CDutil}}$ or target-wise $\mathcal{R}^{\textnormal{\CTutil}}$ restrictions for concordance as per Definition~\ref{drugntargetwisecidef}, we obtain the macro-averaged variations of the C-index. For example, in multi-label classification problems, similar (see e.g. \citet{wu2017unified} with slightly different normalization) performance evaluation measures are referred to as the macro-AUC and instance-AUC, while the name micro-AUC is reserved for the constraint $\mathcal{R}^{\textnormal{\Cutil}}$.

Intuitively, the estimator as per Definition~\ref{genericestimatordef} average the utility value over all possible ways it can be evaluated on the sequence of data by reordering it, given that the reordering belongs to the utility functions domain $\mathcal{R}$. If the domain is not restricted, that is, it consists of all possible data sequences 
of length $\abs{\kernelf}$, the estimator reduces to the classical U-statistic \citep{hoeffding1948class}, the minimum variance unbiased estimators of $\theta$. However, with restricted domains, the estimator is not necessarily unbiased but still has certain desirable asymptotic properties as elaborated in the the following remark.
\begin{remark}
Since $\theta\in[0,1]$, its estimator per Definition~\ref{genericestimatordef} can be shown to have the following asymptotic properties. For any $\epsilon,\delta>0$ and $\operatorname{P}_Z\in\mathcal{P}$ with $\operatorname{P}_{\bm{Z}}[\mathcal{R}]>0$, the inequalities
\[
\operatorname{E}_{\bm{S}}[\abs{\widehat{\theta}(\bm{S})-\theta}]<\epsilon
\]
and
\[
\operatorname{P}_{\bm{S}}[\abs{\widehat{\theta}(\bm{S})-\theta}>\epsilon]<\delta
\]
hold when the sample size $\abs{\bm{S}}$ is large enough. In the literature, these are known as the \textbf{asymptotic unbiasedness} and \textbf{consistency}, respectively. The speed of convergence naturally depends on $\operatorname{P}_{\bm{Z}}[\mathcal{R}]$. For example, the estimator for AUC (see Example~\ref{cindexandaucexample}) converges slowly for very imbalanced binary class distributions. For more in depth analysis of the properties of estimators of (univariate) functions' conditional expectations, we refer to \citet{grunewalder2018plug}. Their work focuses especially on the uniform convergence property, indicating that the convergence takes place simultaneously over a possibly large set of estimators (e.g. estimators associated with some subsets of possible predictors $\mathbb{R}^{\mathcal{X}}$). The property is useful for optimizing prediction performance when designing learning algorithms. The scope of this paper is mainly on prediction performance estimation, and hence we analyze this no further.
\end{remark}

\section{Interaction Concordance}\label{secAindex}

We define the main and interaction effects, as well as collections of predictors that differ in their ability to model these effects in Section~\ref{secEffects}. Next, in Section~\ref{secICI} we define the concept of interaction concordance, an indicator of agreement between predicted and existing interaction directions. Finally, in Section~\ref{secComputational} we discuss how interaction concordance, as well as other previously considered performance estimators, can be calculated in a computationally efficient manner.

\subsection{Main and Interaction Effects}\label{secEffects}

We start by presenting the $2\times 2$ design formed by two drugs $\drug, \drug'$ and two targets $\targ,\targ^*$ as well as of the affinity strength measurements associated with the four drug-target combinations. The design can be expressed as
\begin{align}\label{dataquadruple}
z^q=\left(\begin{aligned}
z  && z^*\\%[1.5ex]
z' &&z'^*\end{aligned}
\right)\in\mathcal{Z}^{2\times 2},
\end{align}
where
\begin{align*}
\begin{aligned}
z &=(\drug,\targ, y)  &&\phantom{WW}&z^* &=(\drug,\targ^*,y^*)\\%[1.5ex]
z'&=(\drug',\targ,y') &&\phantom{WW}&z'^*&=(\drug',\targ^*,y'^*)\end{aligned}\;.
\end{align*}
Here, the data on the same row share the drugs, and those on the same column share the targets.

Next, we provide exact definitions for the effects in a single $2\times 2$ design with both observed and predicted affinity strength values.
\begin{definition}[Grand mean, drug main, target main, and interaction effects in $2\times 2$ design]\label{2x2designeffects}
Consider a $2\times 2$ design (\ref{dataquadruple}) formed from arbitrarily chosen two drugs and two targets, as well as their corresponding outputs drawn from some unknown distribution of data. Let us denote
\begin{align*}
x^q=\left(\begin{aligned}
(\drug,\targ)  &&(\drug,\targ^*)\\%[1.5ex]
(\drug',\targ) &&(\drug',\targ^*)\end{aligned}
\right)\in\mathcal{X}^{2\times 2}
\phantom{WWW}
y^q=\left(\begin{aligned}
y  &&y^*\\%[1.5ex]
y' &&y'^*\end{aligned}
\right)\in\mathbb{R}^{2\times 2}.
\end{align*}
Consider the decomposition of the quadruple of outputs $y^q$ into the following four orthogonal terms:
\begin{align*}
\begin{aligned}
=&y_C\cdot\left(\begin{aligned}
 1&&1\\%[1.5ex]
1&&1\end{aligned}
\right)
+y_\Drug\cdot\left(\begin{aligned}
 1&&1\\%[1.5ex]
-1 &&-1\end{aligned}
\right)
+y_\Targ\cdot\left(\begin{aligned}
1 &&-1\\%[1.5ex]
1 &&-1\end{aligned}
\right)
+y_{\Drug\times\Targ}\cdot\left(
\begin{aligned}
1 &&-1\\%[1.5ex]
-1 &&1
\end{aligned}
\right)
\end{aligned}\;,
\end{align*}
where
\begin{align*}
y_C&=\frac{1}{4}\left(y+y'+y^*+y'^*\right)\\
y_\Drug&=\frac{1}{4}\left(y-y'+y^*-y'^*\right)\\
y_\Targ&=\frac{1}{4}\left(y+y'-y^*-y'^*\right)\\
y_{\Drug\times\Targ}&=\frac{1}{4}\left(y-y'-y^*+y'^*\right)
\end{align*}
are referred to, respectively, as the \textbf{grand mean}, \textbf{drug main}, \textbf{target main} and \textbf{interaction} effects of $\drug$ and $\targ$ on $y$ at $z^q$.

Similarly, we decompose the predictor $\predfun$ on $x$ as
\begin{equation*}
\predfun(\drug,\targ)=\predfun_C+\predfunD(\drug)+\predfunT(\targ)+\predfunDT(\drug,\targ)\;,
\end{equation*}
whose terms can be expressed as
\begin{align*}
\predfun_C&=\frac{1}{4}\left(\predfun(\drug,\targ)+\predfun(\drug',\targ)+\predfun(\drug,\targ^*)+\predfun(\drug',\targ^*)\right)\\
\predfunD(\drug)&=\frac{1}{4}\left(\predfun(\drug,\targ)-\predfun(\drug',\targ)+\predfun(\drug,\targ^*)-\predfun(\drug',\targ^*)\right)\\%=-\predfunD(\drug')\\
\predfunT(\targ)&=\frac{1}{4}\left(\predfun(\drug,\targ)+\predfun(\drug',\targ)-\predfun(\drug,\targ^*)-\predfun(\drug',\targ^*)\right)\\%=-\predfunT(\targ^*)\\
\predfunDT(\drug,\targ)&=\frac{1}{4}\left(\predfun(\drug,\targ)-\predfun(\drug',\targ)-\predfun(\drug,\targ^*)+\predfun(\drug',\targ^*)\right)\;.
%=-\predfunDT(\drug',\targ)
%=-\predfunDT(\drug,\targ^*)
%=\predfunDT(\drug',\targ^*)\;.
\end{align*}
We say that these terms, respectively, model the grand mean, drug main, target main and interaction effects of the drugs and targets on the affinity values at the $2\times 2$ design $z^q$.
\end{definition}

We give the following names for specific collections of predictors based on their decomposition on a given $2\times 2$ design of drug-target pairs.
\begin{definition}\label{constancy_and_linearity}
Let $x^q$ be a $2\times 2$ design of drug-target pairs as per Definition~\ref{2x2designeffects}.
We say that $\predfun$ on $x^q$ is:
\begin{itemize}
\item \textbf{constant} if $\predfun(\drug,\targ)=\predfun(\drug',\targ)=\predfun(\drug,\targ^*)=\predfun(\drug',\targ^*)$;
\item \textbf{target symmetric} if $\predfun(\drug,\targ)=\predfun(\drug,\targ^*)$ and  $\predfun(\drug',\targ)=\predfun(\drug',\targ^*)$;
\item \textbf{drug symmetric} if $\predfun(\drug,\targ)=\predfun(\drug',\targ)$ and  $\predfun(\drug,\targ^*)=\predfun(\drug',\targ^*)$;
\item \textbf{additively separable} if $\predfun(\drug,\targ)+\predfun(\drug',\targ^*)=\predfun(\drug',\targ)+\predfun(\drug,\targ^*)$; and
\item \textbf{nonadditive} otherwise.
\end{itemize}
\end{definition}
It is straightforward to see that additively separable predictors miss the interaction term $\predfunDT$, and drug symmetric and target symmetric predictors additionally miss the terms $\predfunT$ and $\predfunD$, respectively, while constant predictors can only have the term $\predfun_C$.

\subsection{Interaction Concordance}\label{secICI}

We now introduce the concept of \textbf{interaction concordance} that indicates, for a quadruple of data $z^q$, whether the direction of interaction predicted by $\predfun$ agrees with that of the outputs $y^q$. In addition, we define \textbf{interaction concordance index} (\Aindex), a performance estimator designed for evaluating predictors for interaction prediction on a sequence of data. \Aindex assesses how well the predictor captures potential nonadditive interaction effects of drugs and targets on their affinities, while intentionally disregarding the accuracy of predictions based solely on the grand mean or either of the main effects.

Assume we are facing a choice between assigning $\drug$ to $\targ$ and $\drug'$ to $\targ^*$ or the reverse assignment $\drug'$ to $\targ$ and $\drug$ to $\targ^*$. This choice can be quantified by considering the direction of interaction on $z^q$. The first assignment is preferable if $y+y'^*>y^*+y'$. The quadruples of data, for which this condition holds, form the following subset of $\mathcal{Z}^4$:
\[
\mathcal{R}^{\textnormal{\Autil}}=\left\{
\left(\begin{array}{cc}
(\drug,\targ, y)&\phantom{i}(\drug,\targ^*,y^*)\\[1.5ex]
(\drug',\targ,y')&\phantom{i}(\drug',\targ^*,y'^*)
\end{array}\right)
\mid 
(\drug,\drug',\targ,\targ^*)\in\mathcal{D}^2\times\mathcal{T}^2, y+y'^*>y^*+y'
\right\}\;,
\]
that is, the data on the same row of the quadruple share the drugs and those on the same column share the targets, and the last condition indicates that the aggregate affinity strength of the first assignment is greater than that of the reverse.

To indicate whether the interaction direction predicted by $\predfun$ agrees with that determined by the outputs, we define the following degree 4 utility function:
\begin{definition}[Interaction concordance]\label{icutildef}
The correctness of the interaction direction's prediction can be expressed as
\[
\kernelf_\predfun^{\textnormal{\Autil}}
(z,z^*,z',z'^*)=H\left(\predfun(x)-\predfun(x^*)-\predfun(x')+\predfun(x'^*)\right)
\]
restricted to the set  $\mathcal{R}^{\textnormal{\Autil}}$.
\end{definition}

Analogously to the above-considered utilities (see Definition \ref{genericestimand} and Example \ref{concordanceexample2}), the probability distribution counterpart of interaction concordance can be expressed as the value of the conditional expectation functional:
\[
\theta_\predfun^{\textnormal{\Autil}}=\operatorname{E}\left[H\left(f(X)-f(X^*)-f(X')+f(X'^*)\right)\middle|\left(\begin{array}{cc}
Z&\phantom{W}Z^*\\[1.5ex]
Z'&\phantom{W}Z'^*
\end{array}\right)\in\mathcal{R}^{\textnormal{\Autil}}\right]\;,
\]
where the expectation is taken over a distribution of data $\operatorname{P}_Z$, such that $\operatorname{P}_{\bm{Z}}[\mathcal{R}^{\textnormal{\Autil}}]>0$.
Finally, for a given sample of data $\bm{s}$, we obtain the following estimator that averages the utility over all $2\times 2$ designs that can be formed from the sample:
\begin{definition}[Interaction concordance index]\label{defAindex}
Let $\bm{s}\in \mathcal{Z}^{\abs{\bm{s}}}$ be a  sample of data. We refer to the following estimator of $\theta_\predfun^{\textnormal{\Autil}}$ as \Aindex:
\[
\widehat{\theta}^{\textnormal{\Aindex}}_\predfun(\bm{s})=\abs{\mathcal{I}}^{-1}
\sum_{\sigma
\in\mathcal{I}
}
H(\predfun(x_{i})-\predfun(x_{i^*})-\predfun(x_{i'})+\predfun(x_{i'^*}))\;,
\]
where
\[
\mathcal{I}=\left\{\sigma=\left(\begin{array}{cc}
i& i^*\\
i'& i'^*
\end{array}\right)\middle|\sigma\cdot\bm{s}\in\mathcal{R}^{\textnormal{\Autil}} \right\}\;.
\] 
\end{definition}

\smallskip
\noindent
The next proposition follows from Definitions  \ref{constancy_and_linearity} and \ref{defAindex}.
\begin{proposition} Interaction concordance, and thereby also its distribution level counterpart and IC-index, is invariant to additions of constant, drug symmetric, target symmetric and additively separable functions.
\end{proposition}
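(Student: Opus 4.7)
The plan is to reduce everything to a short algebraic identity at the level of the utility function $\kernelf_\predfun^{\textnormal{\Autil}}$ of Definition~\ref{icutildef}; invariance of the distribution level counterpart $\theta_\predfun^{\textnormal{\Autil}}$ and of \Aindex then follows essentially for free. The only place $\predfun$ enters the utility is through the linear combination
\[
\Delta_\predfun(z,z^*,z',z'^*) := \predfun(\drug,\targ) - \predfun(\drug,\targ^*) - \predfun(\drug',\targ) + \predfun(\drug',\targ^*),
\]
evaluated on a quadruple conformable with the restriction $\mathcal{R}^{\textnormal{\Autil}}$, which in particular forces the row-drug and column-target pattern recorded in Definition~\ref{2x2designeffects}. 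Since the Heaviside function is applied only to $\Delta_\predfun$, it suffices to show that $\Delta_g \equiv 0$ on every such $2\times 2$ design for each of the four classes of $g$; then $\Delta_{\predfun+g}=\Delta_\predfun$, and therefore $\kernelf^{\textnormal{\Autil}}_{\predfun+g}=\kernelf^{\textnormal{\Autil}}_\predfun$ pointwise on $\mathcal{R}^{\textnormal{\Autil}}$.

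The verification of $\Delta_g \equiv 0$ is a direct unpacking of Definition~\ref{constancy_and_linearity}. Additive separability of $g$ is literally the equation $g(\drug,\targ)+g(\drug',\targ^*)=g(\drug',\targ)+g(\drug,\targ^*)$, which is $\Delta_g=0$. Target symmetry yields $g(\drug,\targ)=g(\drug,\targ^*)$ and $g(\drug',\targ)=g(\drug',\targ^*)$, so the four terms of $\Delta_g$ cancel in signed pairs; drug symmetry is the analogous statement with the roles of rows and columns swapped; and constancy is a special case of either. In fact each of the first three classes is subsumed by additive separability, so a single line suffices after the inclusion is noted.

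Finally, I would observe that the restriction $\mathcal{R}^{\textnormal{\Autil}}$ is defined purely in terms of the output pattern $y+y'^*>y^*+y'$ and the drug/target alignment in the quadruple, neither of which depends on the predictor. Consequently, the conditioning event in $\theta_\predfun^{\textnormal{\Autil}}$ and the index set $\mathcal{I}$ in Definition~\ref{defAindex} are both unaffected by replacing $\predfun$ by $\predfun+g$, while the integrand and summand are unchanged by the utility-level invariance just established. No real obstacle is anticipated; the entire argument is a bookkeeping exercise, and the only point worth being explicit about is that the invariance passes through $H(\cdot)$ exactly because $g$ contributes nothing to its argument, not because $g$ happens to be small in any quantitative sense.
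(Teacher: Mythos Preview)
Your proposal is correct and follows essentially the same approach as the paper: both arguments reduce to showing that an additively separable perturbation contributes zero to the argument of the Heaviside function in $\kernelf_\predfun^{\textnormal{\Autil}}$, and then observe that the constant, drug-symmetric, and target-symmetric cases are subsumed. Your version is slightly more explicit than the paper's in noting that the restriction $\mathcal{R}^{\textnormal{\Autil}}$ and the index set $\mathcal{I}$ do not depend on $\predfun$, which cleanly justifies the passage to $\theta_\predfun^{\textnormal{\Autil}}$ and the \Aindex.
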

\begin{proof} Let $\predfun$ be a predictor, $\predfun_{\mathcal{D+T}}$ an additively separable function, and $(z,z^*,z',z'^*)
\in \mathcal{R}^{\textnormal{IC}}$ 
a quadruple of data. We recall that $\predfun_{\mathcal{D+T}} = \predfunD + \predfunT+ \predfun_C $, where $\predfunD$, $\predfunT$, and $\predfun_C$ are the components depending on only drug, only target, and neither drug nor target, respectively. We first show that $\kernelf^{\textnormal{IC}}_{\predfun}(z,z^*,z',z'^*) = \kernelf^{\textnormal{IC}}_{\predfun+\predfun_{\mathcal{D+T}}}(z,z^*,z',z'^*)$ for all additively separable functions $\predfun_{\mathcal{D+T}}$:
\begin{align*}
    \kernelf^{\textnormal{IC}}_\predfun(z,z^*,z',z'^*) 
    &= H(f(d,t)-f(d,t^*)-f(d',t)+f(d',t^*)) \\
    &= H(f(d,t)+\predfun_{\mathcal{D+T}}(\drug,\targ)-f(d,t^*)-\predfun_{\mathcal{D+T}}(\drug,\targ^*) \\
    & \qquad -f(d',t)-\predfun_{\mathcal{D+T}}(\drug',\targ)+f(d',t^*)+\predfun_{\mathcal{D+T}}(\drug',\targ^*)) \\
    &=\kernelf^{\textnormal{IC}}_{\predfun+\predfun_{\mathcal{D+T}}}(z,z^*,z',z'^*),
\end{align*}
since  
\begin{align*}
\predfun_{\mathcal{D+T}}(\drug,\targ)+\predfun_{\mathcal{D+T}}(\drug',\targ^*) &= (\predfunD(\drug)+\predfunT(\targ)+\predfun_C) + (\predfunD(\drug')+\predfunT(\targ^*)+\predfun_C)\\
&= \predfun_{\mathcal{D+T}}(\drug,\targ^*) + \predfun_{\mathcal{D+T}}(\drug',\targ).\end{align*}
The invariancy to constant, drug symmetric, and target symmetric functions directly follows.
\end{proof}
For the other utilities considered so far, analogous invariances are summarized in Table~\ref{tab:predictorutilityinvariances}.

\begin{table}[h]
\begin{center}
{\small
    \begin{tabular}{l|ccccc} \hline
\hline \\
[-2.0ex] $\predfun$\hfill\textbackslash\hfill $\kernelf$&$\kernelf_\predfun^{\textnormal{Acc}}$&$\kernelf_\predfun^{\textnormal{\Cutil}}$&$\kernelf_\predfun^{\textnormal{\CDutil}}$&$\kernelf_\predfun^{\textnormal{\CTutil}}$&$\kernelf_\predfun^{\textnormal{\Autil}}$\\
[1.0ex]
\hline \\ [-2.0ex]
Zero&0.5&0.5&0.5&0.5&0.5\\
Constant&&0.5&0.5&0.5&0.5\\
Drug symmetric &&&&0.5&0.5\\
Target symmetric &&&0.5&&0.5\\
Additively separable&&&&&0.5\\
Nonadditive&&&&&\\
\hline
\hline
\end{tabular}
\caption{Summary of invariances of the considered utility functions with respect to the types of predictors named in Definition~\ref{constancy_and_linearity}. The acronyms and definitions of the considered utilities are accuracy (Acc) per Definition~\ref{def:accuracy}, concordance (\Cutil) per Definition~\ref{concordanceindicatordef}, drugwise concordance (\CDutil) per Definition~\ref{drugntargetwisecidef}, targetwise concordance (\CTutil) per Definition~\ref{drugntargetwisecidef} and interaction concordance (\Autil) Definition~\ref{icutildef}.
The table cell is 0.5 if it is the only possible utility value for any data.}
\label{tab:predictorutilityinvariances}
}
\end{center}
\end{table}

\subsection{Estimators' Computational Complexity}\label{secComputational}
Here, we briefly analyze the computational complexity of the considered estimators. In what follows, we denote by
$\drsize = \abs{\mathcal{D}_{\bm{z}}}$ the number of unique drugs and $\tasize=\abs{\mathcal{T}_{\bm{z}}}$ the number of unique targets. For the sake of simplicity, we assume the case where $\trsize\simeq\drsize\tasize$, that is, most of the drug-target pairs have a known DTA value. 

Univariate performance measures, like the mean squared error and the classification accuracy, can be calculated in $O(\drsize\tasize)$  on a sample of data: they decompose into a sum where the loss is evaluated for each pair exactly once. However, a direct approach to estimate the global rank correlation by iterating over all drug-target quadruples to count concordant and discordant pairs results in a complexity of $O(\drsize^2 \tasize^2)$. For the drugwise and targetwise rank correlations and for the \Aindex, the complexities of running such an algorithm are  $O(\drsize\tasize^2)$, $O(\drsize^2\tasize)$, and $O(\drsize^2\tasize^2)$, respectively. These complexities are impractical for large data sets. Notably, \citet{newson2006efficient} propose an algorithm that calculates numbers of concordant and discordant pairs for a sample of $n$ observations in $O(n\log(n))$ time by performing  $O(n\log(n))$ complexity sorting operation, $O(n)$ logarithmic time insertion, and search operations on a binary search tree. This approach directly improves the complexities of global, drugwise, and targetwise ranking error evaluations to $O(\drsize\tasize\log(\drsize\tasize))$, $O(\drsize\tasize\log(\tasize))$, and $O(\drsize\tasize\log(\drsize))$, respectively. Further, as a straightforward extension, the approach allows calculating the \Aindex in $O(\min(\drsize^2\tasize\log(\tasize),\drsize\tasize^2\log(\drsize))$ time, as it reduces to calculating the number of concordant and discordant target pair differences for each possible combination of drugs (or vice versa).

\section{Prediction Performance of Learning Algorithms}

Here, we move from prediction performance estimation of fixed predictors to that of learning algorithms. In Section~\ref{algutilitysection}, we present utility functions for learning algorithms in four different prediction problems based on the off-training set partition as per Definition~\ref{def:otspartition}. Their properties are analyzed in Section~\ref{equivariancesection}.

\subsection{Utilities for Learning Algorithms}
\label{algutilitysection}
Here we focus on utilities of learning algorithms' prediction performance. They are analogous to those of predictors, except that they also account for the learning algorithm and the training data the predictor is inferred from. To stress the distinction between learning algorithms' and predictors' utilities, as well as the nested nature of the former, we may refer to $\akernelf$ as the \textbf{outer utility} and to $\kernelf$ as the \textbf{inner utility} of $\akernelf$. The following definition leans on inner utilities listed in Table~\ref{tab:predictorutilityinvariances}, but is by no means restricted to only those.
\begin{definition}[Utility for learning algorithms' prediction performance]
\label{algoutilitydef}
Let $\kernelf$ be any of the inner utilities summarized in Table~\ref{tab:predictorutilityinvariances}, and let $\abs{\kernelf}$ and $\mathcal{R}$ denote its degree and domain, respectively. For a sequence $\bm{z}$ of data of length $\trsize+\abs{\kernelf}$, let $\bm{z}^{\textnormal{Train}}$ and $\bm{z}^\textnormal{Test}$ denote, respectively, the sequences consisting of the first $\trsize$ and the remaining $\abs{\kernelf}$ entries of $\bm{z}$:
\begin{align}\label{trainestsplit}
\bm{z}=(\underbrace{z_1,\ldots,z_\trsize}_{\bm{z}^\textnormal{Train}},\underbrace{z_{\trsize+1},\ldots,z_{\trsize+\abs{\kernelf}}}_{\bm{z}^\textnormal{Test}})\;.
\end{align}
Let $\algo$ be a learning algorithm as per Definition~\ref{algodef}. Moreover, let
\[
\predictorvar_{\bm{z}^\textnormal{Train}}
=\algo\left(\bm{z}^\textnormal{Train}\right)
\]
be the random element of the predictor learned from $\bm{z}^\textnormal{Train}$ by $\algo$ and let $\predfun_{\bm{z}^\textnormal{Train}}$ denote its realization. Then, let
\begin{align}\label{holdoutdef}
\akernelf_\algo(\bm{z})
=\kernelf_{\predfun_{\bm{z}^\textnormal{Train}}}\left(\bm{z}^\textnormal{Test}\right)\;
\end{align}
denote an outer utility of degree $\abs{\akernelf}=\trsize+\abs{\kernelf}$ for a learning algorithm $\algo$ on $\bm{z}$. The value of $\akernelf_\algo(\bm{z})$ indicates how well $\predfun_{\bm{z}^\textnormal{Train}}$ predicts the outputs of $\bm{z}^{\textnormal{Test}}$ in terms of the inner utility $\kernelf$. To account for the possible restriction $\mathcal{R}$ on the domain of $\kernelf$, we pose the corresponding restriction for the domain of $\akernelf_\algo$:
\begin{equation}\label{genperfrestriction}
\mathcal{C}=\left\{\bm{z}\in\mathcal{Z}^{\trsize+\abs{\kernelf}}\mid\bm{z}^\textnormal{Test}\in\mathcal{R}\right\}\;.
\end{equation}
\end{definition}
Note that we here condition the size of training data to be $\trsize$, because prediction performance tends to be strongly dependent on it for the most of the practically relevant learning algorithms. In practice, and also in our experimental evaluations, we may relax this constraint by allowing $\trsize$ to be inside a given interval rather than a single number.

The next example presents the interaction concordance of learning algorithm obtained by substituting the fixed predictors' interaction concordance $\kernelf^{\textnormal{\Autil}}$ and its domain to the inner utility of $\akernelf$.
\begin{example}[Learning algorithm's interaction concordance]
\label{algo_ic_example}
Let $\algo$ be a learning algorithm and let $\kernelf_\predfun^{\textnormal{\Autil}}$ and $\mathcal{R}^{\textnormal{\Autil}}$ be the interaction concordance and its domain, respectively. Then, the interaction concordance of a learning algorithm $\algo$ can be expressed as
\begin{align*}
\akernelf_\algo^{\textnormal{\Autil}}(\bm{z})
=\kernelf_{\predfun_{\bm{z}^\textnormal{Train}}}^{\textnormal{\Autil}}\left(\bm{z}^\textnormal{Test}\right)\;,
\end{align*}
where $\predfun_{\bm{z}^\textnormal{Train}}$ is a realization of $\predictorvar_{\bm{z}^\textnormal{Train}}=\algo\left(\bm{z}^\textnormal{Train}\right)$ and whose domain is
\[
\mathcal{C}^{\textnormal{\Autil}}
=\left\{\bm{z}\in\mathcal{Z}^{\trsize+\abs{\kernelf}}\mid\bm{z}^{\textnormal{Test}}\in\mathcal{R}^{\textnormal{\Autil}}\right\}\;.
\]
\end{example}

In machine learning literature, quantities of type $\akernelf$ are often used to analyze learning algorithms' generalization performance. In contrast, this paper mainly focuses on their OTS prediction performance, that is, performance on data, whose inputs are distinct from those present in the training data (see e.g. \citet{wolpert1996lack,roos2005generalization}). 
This can be formalized by tightening the restriction (\ref{genperfrestriction}) to
\begin{align}\label{basicotsrestrictioneq}
\mathcal{C}^{\textnormal{OTS}}=\left\{\bm{z}\in\mathcal{Z}^{\trsize+\abs{\kernelf}}\mid\bm{z}^{\textnormal{Test}}\in\mathcal{R},\mathcal{X}_{\bm{z}^{\textnormal{Test}}}\subseteq\mathcal{X}\setminus\mathcal{X}_{\bm{z}^{\textnormal{Train}}}
\right\}\;.
\end{align}
In particular, we analyze the performance on OTS data along the partition given in Definition~\ref{def:otspartition}. The restrictions corresponding to these cases are presented in the following explicit definition.
\begin{definition}[Four types of off-training-set utilities]\label{settingrestrictionsdef}
Let $\kernelf$, $\mathcal{R}$ and $\akernelf$ be as in Definition~\ref{algoutilitydef}. Then, by restricting the domain of $\akernelf$ as
\begin{align}
\label{conditionsets}
\begin{aligned}
\mathcal{C}^{\textnormal{IDIT}} &=\left\{\bm{z}\in\mathcal{Z}^{\trsize+\abs{\kernelf}}\mid\bm{z}^{\textnormal{Test}}\in\mathcal{R},\mathcal{X}_{\bm{z}^{\textnormal{Test}}}\subseteq \Xkdkt_{\bm{z}^{\textnormal{Train}}}
\right\}\\
\mathcal{C}^{\textnormal{ODIT}} &=\left\{\bm{z}\in\mathcal{Z}^{\trsize+\abs{\kernelf}}\mid\bm{z}^{\textnormal{Test}}\in\mathcal{R},\mathcal{X}_{\bm{z}^{\textnormal{Test}}}\subseteq \Xndkt_{\bm{z}^{\textnormal{Train}}}
\right\}\\
\mathcal{C}^{\textnormal{IDOT}} &=\left\{\bm{z}\in\mathcal{Z}^{\trsize+\abs{\kernelf}}\mid\bm{z}^{\textnormal{Test}}\in\mathcal{R},\mathcal{X}_{\bm{z}^{\textnormal{Test}}}\subseteq \Xkdnt_{\bm{z}^{\textnormal{Train}}}
\right\}\\
\mathcal{C}^{\textnormal{ODOT}} &=\left\{\bm{z}\in\mathcal{Z}^{\trsize+\abs{\kernelf}}\mid\bm{z}^{\textnormal{Test}}\in\mathcal{R},\mathcal{X}_{\bm{z}^{\textnormal{Test}}}\subseteq \Xndnt_{\bm{z}^{\textnormal{Train}}}
\right\}
\end{aligned}
\;,
\end{align}
we get the utilities corresponding to the four OTS prediction performances along the partition of OTS data as per Definition~\ref{def:otspartition}.
\end{definition}
We continue Example~\ref{algo_ic_example} by posing the additional restriction $\mathcal{C}^{\textnormal{IDIT}}$ on the outer utility's domain. Accordingly, we present the variant of learning algorithms' interaction concordance that focuses on IDIT data:
\begin{example}[Learning algorithm's interaction concordance on IDIT data]
\label{ic-idit_example}
Let $\algo$ be a learning algorithm and let $\kernelf_\predfun^{\textnormal{\Autil}}$ and $\mathcal{R}^{\textnormal{\Autil}}$ be the interaction concordance and its domain, respectively. Then, the interaction concordance of a learning algorithm $\algo$ on off-training-set data with in-training-set drugs and in-training-set-targets (IC-IDIT) can be expressed as
\begin{align*}
\akernelf_\algo^{\textnormal{\Autil-IDIT}}(\bm{z})
=\kernelf_{\predfun_{\bm{z}^\textnormal{Train}}}^{\textnormal{\Autil}}\left(\bm{z}^\textnormal{Test}\right)\;,
\end{align*}
where $\predfun_{\bm{z}^\textnormal{Train}}$ is a realization of $\predictorvar_{\bm{z}^\textnormal{Train}}=\algo\left(\bm{z}^\textnormal{Train}\right)$ and whose domain is
\[
\mathcal{C}^{\textnormal{\Autil-IDIT}}
=\left\{\bm{z}\in\mathcal{Z}^{\trsize+\abs{\kernelf}}\mid\bm{z}^{\textnormal{Test}}\in\mathcal{R}^{\textnormal{\Autil}},\mathcal{X}_{\bm{z}^{\textnormal{Test}}}\subseteq \Xkdkt_{\bm{z}^{\textnormal{Train}}}
\right\}\;.
\]
\end{example}
\noindent The IDOT, ODIT and ODOT variants are defined analogously. Similar variants of $\akernelf$ can also be defined for other inner utilities given in Table~\ref{tab:predictorutilityinvariances}.

Analogous to the utilities' distributional counterparts considered in Definition~\ref{genericestimand}, we define the distribution level utilities for learning algorithms:
\begin{definition}[Distribution utility of learning algorithm]\label{otsestimand}
Let $\algo$ and $\akernelf$ be as in Definition~\ref{algoutilitydef}. Then, the expected utility of $\algo$ is
\begin{align*}
\theta_\algo
&=\operatorname{E}\left[\akernelf_\algo(\bm{Z})\mid \bm{Z}\in\mathcal{C}\right]\;,
\end{align*}
where $\mathcal{C}$ is any of the restrictions given in Definition~\ref{settingrestrictionsdef} and the expectation is taken over probability distributions on data such that $\operatorname{P}_{\bm{Z}}[\mathcal{C}]>0$. If the learning algorithm $\algo$ is randomized, the expectation also accounts for the distribution of the random element $\predictorvar_{\bm{Z}}$.
\end{definition}

Given a sample $\bm{s}$ of IID data drawn from $\operatorname{P}_{Z}$, we obtain estimators of $\theta_\algo$ by averaging over the different possibilities of evaluating $\akernelf_\algo$ on the sample that are conformable with $\mathcal{C}$, similarly to Definition~\ref{genericestimatordef}. However, each evaluation of the utility requires retraining and the number of possibilities is combinatorial with respect to the sample size, which makes it computationally infeasible in most practical cases. Therefore, one usually resorts to incomplete estimators only averaging over a small and assumedly representative number of train-test splits of the sample, popularly known as cross-validation (CV) or repeated hold-out estimators. A typical CV estimator can be expressed as
\begin{align}\label{cvestimator}
\widehat{\theta}_\algo(\bm{s})
&=\abs{\Upsilon}^{-1}
\sum_{\pi\in\Upsilon}\widehat{\theta}_{\kernelf_{\predfun_{\bm{s}^\textnormal{Train}}}}(\bm{s}^{\textnormal{Test}})\;,
\end{align}
where $\Upsilon$ is some subset of permutations $\pi\cdot\bm{s}$ of the sample sequence, such that
\begin{align}\label{samplesplit}
\pi\cdot\bm{s}=(\underbrace{z_1,\ldots,z_\trsize}_{\bm{s}^\textnormal{Train}},\underbrace{z_{\trsize+1},\ldots,z_{\trsize+\testsetsize}}_{\bm{s}^\textnormal{Test}},\underbrace{z_{\trsize+\testsetsize+1},\ldots,
z_{\abs{\bm{s}}}}_{\bm{s}^\textnormal{Ignored}})\;,
\end{align}
for some $\testsetsize\leq\abs{\bm{s}}-\trsize$.
Similarly to (\ref{trainestsplit}), the part $\bm{s}^\textnormal{Train}$ denotes the data used for inferring the predictor in the CV round. Here $\bm{s}^\textnormal{Test}$ denotes the data used for estimating the prediction performance of the predictor inferred from the training data, given one of the restrictions in (\ref{conditionsets}).  Note that while $\bm{z}^\textnormal{Test}$ in~(\ref{trainestsplit}) is defined to be exactly of length $\abs{\kernelf}$, here the length of $\bm{s}^\textnormal{Test}$ is not fixed but depends on how large portion of the remaining sample is conformable with the restriction. The part $\bm{s}^\textnormal{Ignored}$ consists of the sample data unusable for testing due to the restriction and has to be ignored in the cross-validation round.
One split of type (\ref{samplesplit}) is illustrated in Figure~\ref{fig:testtrain}. In the figure, a sample of data $\bm{s}$ consists of the drug-target indexed matrix's entries colored with black, blue, yellow, beige and red. The black colored entries refer to the data in  $\bm{s}^\textnormal{Train}$ and the remaining parts of the sample belong to either $\bm{s}^\textnormal{Test}$ or  $\bm{s}^\textnormal{Ignored}$ depending which of the restrictions per Definition~\ref{settingrestrictionsdef} are in place.

\begin{figure}
    %\centering 
    \begin{tcolorbox}
    \centering
    \includegraphics[width=0.950\linewidth]{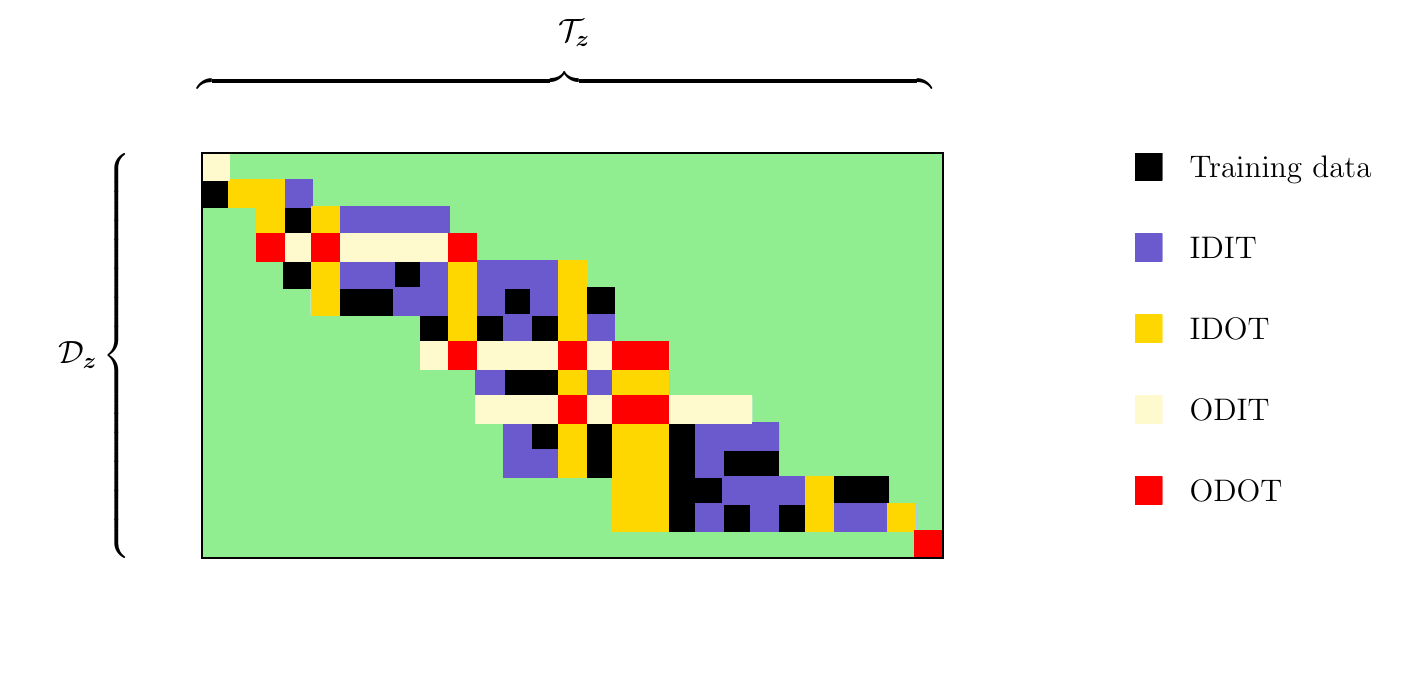}
    \end{tcolorbox}
    \caption{An example of training-test splits corresponding to different OTS problems: in all problems, black squares denote the training set. The test set for the IDIT problem may consist of any blue datum, while test sets IDOT and ODIT problems may contain any dark yellow and light yellow datum, respectively. Finally, any red datum may be included in the test set simulating the ODOT problem.}
    \label{fig:testtrain}
\end{figure}

\subsection{Permutation Equivariance and Side Information}\label{equivariancesection}

We now focus on analysing whether different types of learning algorithms can solve learning problems in terms of the above defined utilities. For this purpose, we nail down the following terms.
\begin{definition}[Learning problems associated to utility]\label{def:learningproblem}
Let $\kernelf$, $\mathcal{R}$ and $\akernelf$ be as in Definition~\ref{algoutilitydef}, and let $\mathcal{C}$ be one of the four  restrictions of $\akernelf$ as per Definition~\ref{settingrestrictionsdef}. Let $\mathcal{P}$ be the collection of probability distributions on data such that $\operatorname{P}_{\bm{Z}}[\mathcal{C}]>0$ for all $\operatorname{P}_Z\in\mathcal{P}$. Consider the estimand $\theta_\algo$ as per  Definition~\ref{otsestimand} as a functional $G:\operatorname{P}_{Z},\algo\mapsto\theta_\algo$ on data distributions and learning algorithms, that is
\begin{align*}
G(\operatorname{P}_{Z},\algo)&=\operatorname{E}\left[\akernelf_\algo(\bm{Z})\mid \bm{Z}\in\mathcal{C}\right]\;,
\end{align*}
where the expectation is taken over $\operatorname{P}_{Z}$ and the distribution of $\predictorvar_{\bm{Z}}$. We refer to $G=G(\cdot,\cdot)$ as the \textbf{collection of learning problems associated with utility $\akernelf$}, and any of its members $G(\operatorname{P}_{Z},\cdot)$ as a \textbf{learning problem associated with $\akernelf$}.
\end{definition}

We continue Example~\ref{ic-idit_example} and provide
the collection of learning problems associated with learning algorithm's interaction concordance on IDIT data. Analogous collections associated with combinations of the five inner utilities in Table~\ref{tab:predictorutilityinvariances} and the four outer ones given in Definition~\ref{settingrestrictionsdef} are formed similarly.
\begin{example}[IC-IDIT learning problems]\label{IC-IDIT_problems_example}
Let $\akernelf^{\textnormal{\Autil-IDIT}}$ and $\mathcal{C}^{\textnormal{\Autil-IDIT}}$ be as in Example~\ref{ic-idit_example} and let $\mathcal{P}$ be the collection of probability distributions on data such that $\operatorname{P}_{\bm{Z}}[\mathcal{C}^{\textnormal{\Autil-IDIT}}]>0$ for all $\operatorname{P}_Z\in\mathcal{P}$. Then, the  functional \[G^{\textnormal{\Autil-IDIT}}:\operatorname{P}_{Z},\algo\mapsto\theta_\algo
\]
can be considered as the collection of learning problems associated with the learning algorithms' interaction concordance on IDIT data $\akernelf^{\textnormal{\Autil-IDIT}}$.
\end{example}

To analyze whether different types of learning algorithms are suitable for solving the above considered learning problems, we define the concept of \textbf{alignment} between them.
\begin{definition}[Alignment of a learning algorithm with learning problems]
Let $\akernelf$, $\mathcal{C}$, $\mathcal{P}$, $G(\operatorname{P}_{Z},\cdot)$ and $G(\cdot,\cdot)$ be as in Definition~\ref{def:learningproblem}. If $G(\operatorname{P}_{Z},\algo)\leq 0.5$, we say that learning algorithm $\algo$ is \textbf{badly aligned} with the learning problem $G(\operatorname{P}_{Z},\cdot)$, that is, the expected utility is either exactly at random level or even worse. Otherwise, $\algo$ is \textbf{well-aligned} with the learning problem.
In particular, we say that $\algo$ is \textbf{uniformly badly aligned} with the collection $G(\cdot,\cdot)$ of learning problems associated with the utility $\akernelf$ if 
\begin{align}\label{uniformbadalignment}
G(\operatorname{P}_{Z},\algo)=0.5\quad\forall\operatorname{P}_{Z}\in\mathcal{P}\;.
\end{align}
\end{definition}
\begin{remark}
As a slightly related work, we recall the no-free-lunch theorem in supervised classification \citep{wolpert1996lack}. If one ``averages'' the expected OTS binary classification accuracy of a learning algorithm over all distributions of data $\operatorname{P}_{Z}\in\mathcal{P}$, one gets exactly 0.5. It is straightforward to show that this also concerns the other utilities considered in this paper. That is, on average, any learning algorithm is badly aligned.
\end{remark}
Next, we point out types of learning algorithms that are badly aligned with some of the considered collections of learning problems. Our first and most straightforward partition of learning algorithms follows the additively separable and nonadditive predictor collections in Definition~\ref{constancy_and_linearity}. This coincides with the classical categorizations of learning algorithms based on whether they infer linear or nonlinear models. The former type of learning algorithms are badly aligned with $G^{\textnormal{\Autil}}$.

One can similarly consider algorithms only able to learn either constant, drug-symmetric or target-symmetric predictors, but such learning algorithms are rather trivial in our context. However, they are still useful and practical as baselines and reference points in algorithm comparisons. Typical examples are learning algorithms inferring \textbf{majority classifiers} or \textbf{mean predictors}. In binary classification, a majority classifier predicts for any data the majority class of the training data. Mean predictor predicts the mean DTA value in the training data. Similarly, one can have their drugwise and targetwise counterparts that predict, for a drug-target pair, the majority or mean DTA value of training data having the same drug component and the majority or mean DTA of training data having the same target component, respectively. Obviously, these learning algorithms are badly aligned with all learning problems associated with utilities invariant with respect to such restricted model types, as shown in Table~\ref{tab:predictorutilityinvariances}. For example, the drugwise mean predictor is uniformly badly aligned with both $G^{\textnormal{\Autil}}$ and $G^{\textnormal{\CTutil}}$. For other inner utilities, the drugwise mean predictor shows that even this simple learning algorithm may obtain quite competitive results if the drug main effects are highly dominant compared to the other effects in the data. 

We now turn our focus on a more interesting categorization of learning algorithms based on their permutation equivariance properties with respect to drug identities and/or target identities (see e.g. \citet{bogatskiy2022symmetry,pan2022equivariant} and references therein for the use of equivariance in machine learning). Intuitively, these indicate whether learning algorithm's  inductive bias contains any systematic differences between drugs or between targets prior to training phase. Then, we analyze the effect of this type of inductive bias---or rather the absence of it---on the generalization to drugs, targets or both, that are not encountered in the training data. The equivariance properties in question are formally defined as follows.
\begin{definition}[Learning algorithms' permutation equivariance to drug and target identities]\label{equivariancedef}
Let $\operatorname{\Pi}_{\mathcal{D}}$ and $\operatorname{\Pi}_{\mathcal{T}}$ denote, respectively, the finitary symmetric groups on drug and target identities (i.e., they contain all finitary permutations for drugs and targets)\footnote{While the sets of drugs and targets could be infinite or even uncountable, we restrict our consideration to finitary permutations involving only a finite number of drug or target identity exchanges, since, in practice, we only encounter finite numbers of drugs and targets.}. The action of $\pi_{\mathcal{D}}\in\operatorname{\Pi}_{\mathcal{D}}$ on $\bm{z}$ is specified such that, if $(\drug_i,\targ_i,y_i)$ represents the $i$th entry of $\bm{z}$, then the $i$th entry of the image $\pi_{\mathcal{D}}\cdot\bm{z}$ is $(\pi_{\mathcal{D}}(\drug_i),\targ_i,y_i)$. Similarly, the actions of $\pi_{\mathcal{D}}$ on predictor $\pi_{\mathcal{D}}\cdot\predfun(\drug,\targ)=\predfun(\pi_{\mathcal{D}}(\drug),\targ)$. Let $\algo$ be a randomized learning algorithm as per Definition~\ref{algodef}. We say that $\algo$ is \textbf{permutation equivariant} with respect to drug identities, or shortly \textbf{drug permutation equivariant}, if
\begin{align*}
\operatorname{P}[\predictorvar_{\pi_{\mathcal{D}}\cdot\bm{z}}\in\mathcal{F}]=\operatorname{P}[\predictorvar_{\bm{z}}\in\pi_{\mathcal{D}}\cdot\mathcal{F}]
\end{align*}
holds for all sequences of data $\bm{z}$, all measurable sets of predictors $\mathcal{F}\subseteq\mathbb{R}^{\mathcal{X}}$ and all $\pi_{\mathcal{D}}\in\operatorname{\Pi}_{\mathcal{D}}$. Learning algorithm $\algo$ being \textbf{target permutation equivariant} is defined analogously. Intuitively, equivariance indicates that swapping the identities of drugs, targets, or both in the training data has the same effect as if the identities were swapped in the distribution of predictors learned from the data.
\end{definition}
Recall that the training data is defined as a sequence of affinity strength observations associated with categorical variable values for drugs and targets, but side information about the drugs, targets or both may also be available, as depicted in Figure~\ref{fig:sideinfo}. Note that we consider the side information to be independent of the training data, and hence it can be interpreted as being a part of the algorithm's inherent inductive bias. Conversely, we interpret the absence of side information in the widest possible sense, that is, the learning algorithm's inductive bias does not even implicitly make any other difference between drugs or between targets than considering them as distinct categorical values. Therefore, by the principle of indifference, no systematic prediction differences between them can take place that is not solely inferred from training data. 

We now present an exhaustive result for all learning problems considered in this paper.
\begin{proposition}[Alignment of drug and target permutation equivariant learning algorithms]
\label{alignmentproposition}
Consider the partition of learning problems in Table~\ref{alignment_table}, that contains 20 distinct collections of learning problems formed by composing one of the five inner utilities in Table~\ref{tab:predictorutilityinvariances} with one of the four OTS outer utilities as per Definition~\ref{settingrestrictionsdef}.
Learning algorithm's drug and target permutation equivariance implies that it is uniformly badly aligned with the collections of learning problems shaded with red in Table~\ref{alignment_table}. The implication does not hold for the other collections in the table.
\begin{table}[h]
\begin{center}
\usetikzlibrary{patterns}
{\upshape
\begin{NiceTabular}[corners]{|c|W{c}{0.8cm}W{c}{0.2cm}W{l}{1cm}:W{c}{0.8cm}W{c}{0.2cm}W{l}{1cm}|}
\CodeBefore
\rectanglecolor{yellow!35}{9-9}{11-11}
  \begin{tikzpicture}
	\fill [pattern=north west lines,pattern color=red] (6-|7) |- (8-|5) -- cycle ;
  \end{tikzpicture}
\Body
\Hline
& \Block{1-3}{ITS targets} &&& \Block{1-3}{OTS targets}&&\\
\Hline
\rotate\Block[v-center]{3-1}{ITS\\drugs} &
\Block[v-center]{3-3}{
\begin{tabular}{W{c}{0.7cm}W{c}{0.1cm}W{l}{1cm}}
Acc&& C$_{d}$
\\
& C &\\
C$_{t}$ && IC\\
\end{tabular}
}&&&
\Block[v-center]{3-3}{
\begin{tabular}{W{c}{0.7cm}W{c}{0.1cm}W{l}{1cm}}
Acc&& C$_{d}$
\\
& C &\\
C$_{t}$ && IC \\
\end{tabular}
}&&\Block[transparent,tikz={pattern = north west lines,pattern color=red}]{6-1}{}\\
&&&&&&\\
&&&&&&\\
&&&&&&\\
%\Hline
\hdottedline
\rotate\Block[v-center]{3-1}{OTS\\drugs}
&
\Block[v-center]{3-3}{
\begin{tabular}{W{c}{0.7cm}W{c}{0.1cm}W{l}{1cm}}
Acc&& C$_{d}$
\\
& C &\\
C$_{t}$ && IC \\
\end{tabular}
}&&&
\Block[v-center]{3-3}{
\begin{tabular}{W{c}{0.7cm}W{c}{0.1cm}W{l}{1cm}}
Acc&& C$_{d}$
\\
& C &\\
C$_{t}$ && IC\\
\end{tabular}
}
&&\\
&&&&&&\\
&\Block[transparent,tikz={pattern = north west lines,pattern color=red}]{1-6}{}&&&&&\\ 
\hline
\end{NiceTabular}
}
\end{center}
\caption{The table is first partitioned, with vertical and horizontal dotted lines, along the four restrictions as per Definition~\ref{settingrestrictionsdef} of the utility $\akernelf$ of the learning algorithm as per Definition~\ref{algoutilitydef}. Namely, top-left: in-training-set drugs and in-training-set targets (IDIT), top-right: in-training-set drugs and off-training-set targets (IDOT), bottom-left:
off-training-set drugs and in-training-set targets (ODIT), and bottom-right: off-training-set drugs and off-training-set targets (ODOT). These four areas are subsequently divided according to the five predictors' utilities with acronyms as in Table~\ref{tab:predictorutilityinvariances}: accuracy (ACC), concordance (\Cutil), drugwise concordance (\CDutil), targetwise concordance (\CTutil) and interaction concordance (\Autil).
Each of the 20 cells corresponds to collection of learning problems associated with a utility function $\akernelf$. For example, the slot IC in the IDIT area corresponds to $G^{\textnormal{\Autil-IDIT}}$ as per Example~\ref{IC-IDIT_problems_example}. The red shaded area illustrates the learning problems that drug and target permutation equivariant learning algorithms are uniformly badly aligned with.}
\label{alignment_table}
\end{table}
\end{proposition}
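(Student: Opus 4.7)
The plan is to prove the two directions of the claim separately.

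For the forward direction (equivariance forcing $G = 1/2$ on the red cells), the key lemma is that if $\pi_\mathcal{D} \in \operatorname{\Pi}_\mathcal{D}$ is a transposition exchanging two OTS drug identities in $\mathcal{D} \setminus \mathcal{D}_{\bm{z}^{\text{Train}}}$, then $\pi_\mathcal{D} \cdot \bm{z}^{\text{Train}} = \bm{z}^{\text{Train}}$, so drug permutation equivariance of $\algo$ yields $\predictorvar_{\bm{z}^{\text{Train}}} \stackrel{d}{=} \pi_\mathcal{D} \cdot \predictorvar_{\bm{z}^{\text{Train}}}$; the analogous statement holds for an OTS target transposition $\pi_\mathcal{T}$. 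Each of the four non-accuracy inner utilities has an antisymmetric argument: $\predfun(d,t) - \predfun(d,t^*) - \predfun(d',t) + \predfun(d',t^*)$ negates under the action of either a $d \leftrightarrow d'$ drug swap or a $t \leftrightarrow t^*$ target swap on $\predfun$; $\predfun(d_1,t_1) - \predfun(d_2,t_2)$ negates under the joint swap; $\predfun(d,t_1) - \predfun(d,t_2)$ under the target swap; and $\predfun(d_1,t) - \predfun(d_2,t)$ under the drug swap. In each red cell, the swap required to negate the argument involves only drug or target identities that are OTS by the corresponding outer restriction, so the lemma applies. Since $H(a) + H(-a) = 1$, the sign-negating swap together with the distributional equality $\predictorvar \stackrel{d}{=} \pi \cdot \predictorvar$ gives $\operatorname{E}_\predictorvar[\kernelf] = 1 - \operatorname{E}_\predictorvar[\kernelf]$, i.e., $1/2$, pointwise on every admissible train--test split; integrating over $(\bm{Z}^{\text{Train}}, \bm{Z}^{\text{Test}})$ conditional on the outer restriction then yields $G(\operatorname{P}_Z, \algo) = 1/2$ for every $\operatorname{P}_Z \in \mathcal{P}$. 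Concretely, the drug swap alone covers IC-ODIT and $C_t$-ODIT, the target swap alone covers IC-IDOT and $C_d$-IDOT, and the four remaining ODOT cells admit an individual swap when the shared index collapses and the joint swap otherwise.

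For the reverse direction (the implication failing on the unshaded cells), I would exhibit, for each such cell, a drug and target permutation equivariant algorithm together with a distribution on which $G > 1/2$. A constant predictor of the training majority class is trivially equivariant and exceeds $1/2$ accuracy on any class-imbalanced distribution, covering all five Acc cells. A target main effect learner that outputs the within-target training mean on ITS targets and the grand mean otherwise is equivariant and exceeds $1/2$ on distributions with strong target main effects, covering C-IDIT, C-ODIT, $C_d$-IDIT, and $C_d$-ODIT; the drug-symmetric analogue covers C-IDOT, $C_t$-IDIT, and $C_t$-IDOT. For IC-IDIT, a symmetric matrix factorization with exchangeable random initialization is drug and target equivariant yet can learn nonadditive interaction patterns on the training block, providing the required $G > 1/2$.

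The main obstacle is identifying the appropriate swap in each red cell and verifying that it lives entirely among OTS identities under the outer restriction. This is straightforward for IC (the utility argument is antisymmetric in each of the two identity pairs) and for $C_d$ or $C_t$ in their respective red cells (where the antisymmetry involves only one identity pair, matching the OTS axis of the restriction). The subtlest case is C-ODOT, where the argument is antisymmetric only under the simultaneous swap of both drug and target identities, so one must argue separately in the degenerate subcases $d_1 = d_2$ and $t_1 = t_2$ (where an individual OTS swap suffices) and in the generic case. A secondary subtlety is the order of integration: because equivariance governs only the conditional distribution of $\predictorvar$ given training data, the pointwise $1/2$ identity must be derived at the conditional level before the outer expectation over $(\bm{Z}^{\text{Train}}, \bm{Z}^{\text{Test}})$ is taken.
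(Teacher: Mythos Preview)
Your proposal is correct and follows essentially the same approach as the paper: the forward direction hinges on the observation that a transposition of OTS identities fixes the training data, so equivariance forces a distributional symmetry on the learned predictor that, combined with $H(-a)=1-H(a)$, pins the conditional expected utility to $1/2$; the paper carries this out explicitly only for IC-ODIT and declares the remaining red cells analogous, whereas you systematically match each red cell to its sign-negating swap and note the joint-swap subtlety for C-ODOT (a detail the paper leaves implicit). For the reverse direction the paper simply appeals to the simulation experiments in Section~5.1 (the GS, DS, TS, SS, PS, PR learners) rather than constructing counter-examples in the proof itself; your explicit constructions (majority classifier, target- and drug-main-effect learners, exchangeably initialized matrix factorization) are precisely in that spirit and would make the argument self-contained. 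One trivial slip: there are four Acc cells, not five.
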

\begin{proof}
We show that if a learning algorithm is drug and target permutation equivariant, then it is uniformly badly aligned with $G^{\textnormal{\Autil-ODIT}}$. The other cases can be shown analogously.

Let $\bm{z}\in\mathcal{Z}^{\abs{\bm{z}}}$ be a sequence of training data, and let $\predictorvar_{\bm{z}}=\algo(\bm{z})$ be the random element inferred by $\algo$ from $\bm{z}$. Moreover, let $\drug\notin\mathcal{D}_{\bm{z}}$ and $\drug'\notin\mathcal{D}_{\bm{z}}$ be two OTS drugs. Then, obviously $\pi_{(\drug,\drug')}\cdot\bm{z}=\bm{z}$, where $\pi_{(\drug,\drug')}\in\operatorname{\Pi}_{\mathcal{D}}$ is a permutation that swaps the identities of drugs $\drug$ and $\drug'$.
Permutation equivariance of a learning algorithm $\algo$ to drug identities implies
\begin{align}\label{symmetryoftrsetimpliessymmetryofdist}
\begin{aligned}
\operatorname{P}[\predictorvar_{\bm{z}}\in\mathcal{F}]&=\operatorname{P}[\predictorvar_{\pi_{(\drug,\drug')}\cdot\bm{z}}\in\mathcal{F}]\\&=\operatorname{P}[\predictorvar_{\bm{z}}\in\pi_{(\drug,\drug')}\cdot\mathcal{F}]
\end{aligned}\;,
\end{align}
indicating that the distribution of $\predictorvar$ is symmetric with respect to OTS drugs. Let us fix a $2\times 2$ design of data:
\begin{align}\label{dataquadruplerepeated}
\begin{aligned}
&z &&=(\drug,\targ, y)  &&\phantom{WW}z^* &&=(\drug,\targ^*,y^*)\\%[1.5ex]
&z'&&=(\drug',\targ,y') &&\phantom{WW}z'^*&&=(\drug',\targ^*,y'^*)\end{aligned}
\end{align}
such that
\[
\left(\begin{array}{cc}
z&\phantom{W}z^*\\[1.0ex]
z'&\phantom{W}z'^*
\end{array}\right)
\in\mathcal{C}^{\textnormal{IC}}.
\]
\newcommand{\auxvar}{Q}%
Moreover, let us denote $\auxvar=\predictorvar_{\bm{z}}(\drug,\targ),\auxvar^*=\predictorvar_{\bm{z}}(\drug,\targ^*),\auxvar'=\predictorvar_{\bm{z}}(\drug',\targ),\auxvar'^*=\predictorvar_{\bm{z}}(\drug',\targ^*)$ for the values of $\predictorvar_{\bm{z}}$ on the $2\times 2$-design (\ref{dataquadruplerepeated}). For these variables, the symmetry (\ref{symmetryoftrsetimpliessymmetryofdist}) reduces to
\begin{align}\label{drugsymmetry}
\operatorname{P}\left[\left(\begin{array}{cc}
\auxvar&\phantom{W}\auxvar^*\\[1.0ex]
\auxvar'&\phantom{W}\auxvar'^*
\end{array}\right)\in\mathcal{Q}\right]
=\operatorname{P}\left[\left(\begin{array}{cc}
\auxvar'&\phantom{W}\auxvar'^*\\[1.0ex]
\auxvar&\phantom{W}\auxvar^*
\end{array}\right)\in\mathcal{Q}\right]
\end{align}
for any measurable subset $\mathcal{Q}\subseteq\mathbb{R}^4$.
Then, the expected interaction concordance on (\ref{dataquadruplerepeated}) is
\begin{align}
\operatorname{E}\left[\kernelf^{\textnormal{\Autil}}_\predictorvar(z,z^*,z',z'^*)\right]&=
\operatorname{E}[H\left(\auxvar-\auxvar^*-\auxvar'+\auxvar'^*\right)]\nonumber\\
&=\frac{1}{2}\operatorname{E}[H\left(\auxvar-\auxvar^*-\auxvar'+\auxvar'^*\right)]+\frac{1}{2}\operatorname{E}[H\left(\auxvar-\auxvar^*-\auxvar'+\auxvar'^*\right)]\nonumber\\
&=\frac{1}{2}\operatorname{E}[H\left(\auxvar-\auxvar^*-\auxvar'+\auxvar'^*\right)]+\frac{1}{2}\operatorname{E}[H\left(\auxvar'-\auxvar'^*-\auxvar+\auxvar^*\right)]\label{symmetryimplication}\\
&=\frac{1}{2}\operatorname{E}[H\left(\auxvar-\auxvar^*-\auxvar'+\auxvar'^*\right)]+\frac{1}{2}\operatorname{E}[1-H\left(\auxvar-\auxvar^*-\auxvar'+\auxvar'^*\right)]\label{heavisideimplication}\\
&=\frac{1}{2}\nonumber\;,
\end{align}
where equality (\ref{symmetryimplication}) follows from the symmetry (\ref{drugsymmetry}) and equality (\ref{heavisideimplication}) from the $H(-a)=1-H(a)$ property of the Heaviside function (\ref{heavisidefun}).
Accordingly, the expected interaction concordance is 0.5 for all ODIT data.

The consideration is analogous for IDOT and ODOT. With similar reasoning, we can show the expectation to be 0.5 for the drugwise concordance for IDOT, targetwise concordance for ODIT, and both for ODOT. In addition, for the ordinary (i.e., not drugwise nor targetwise) concordance, we can use the same approach to show that the expectation is 0.5 for ODOT data. For all other cases in Table \ref{alignment_table}, it is easy to find counter-examples of data distribution--learning algorithm combinations for which the expectation is larger than 0.5. These kinds of examples are shown in our simulation experiments.
\end{proof}

\section{Experiments}
An experimental study was conducted to demonstrate the behavior of the commonly used prediction performance estimators and validate the proposed \Aindex with different types of predictors and data. First, a simulation study was carried out to demonstrate the results of Proposition~\ref{alignmentproposition}, as described in Section~\ref{sec:simulation}. Then, experiments were carried out on benchmark drug-target data sets detailed in Section~\ref{sec:datasets}, with cross-validation approach described in Section~\ref{sec:crossvalidation}, using various machine learning algorithms described in Section~\ref{sec:methods}. The results of the experiments are presented in Section~\ref{sec:results}. The codes and links to all data used in the experiments are available at \url{https://github.com/TurkuML/IC-index-experiments}.

\subsection{Simulation}\label{sec:simulation}
To demonstrate the 20 learning problems covered in Table~\ref{alignment_table} in simple form,
we created simulated data representing a binary classification setting with $\{-1,1\}$ labels
resembling the classical XOR problem with imbalanced classes. The experiment was repeated 100~000 times and their results are averaged. An example of simulated data created during one of these repetitions is illustrated in Figure~\ref{fig:XOR_data}. The data has 200 ``drugs'' and 200 ``targets'', whose indices were used to calculate the class labels as follows:
\[
y=2\cdot(i_\drug > 20)\oplus (i_\targ \leq 40)-1\;,
\]
where $i_\drug\in\{1,\ldots,200\}$ and $i_\targ\in\{1,\ldots,200\}$ refer to the indices of drugs and targets, respectively, and $\oplus$ refers to the logical XOR function.
%The thresholds were selected so that 10~\% of the unique drug indices and 20~\% of the unique target indices were smaller than the thresholds. 
%In the simulation, 
Noise was added by randomly reversing class labels with 5~\% probability. Of these labels, 25~\% were randomly selected as known. 

The data contains the grand mean, drug main, target main and interaction effects in the following sense. There are more data labeled with 1 than with -1, indicating the presence of nonzero grand mean. Some drugs are associated to positive class labels more often than other drugs, implying a drug main effect. The same concerns the targets. Finally, since the XOR problem is inherently not additively separable, interaction effects as per Definition~\ref{2x2designeffects} naturally emerge.

%With the simulation, we considered all 20 learning problems covered in Table~\ref{alignment_table}. 
%During every repetition of the simulation, 50~\% of the drugs and targets were randomly selected as test drugs and test targets. A single test set then consists of data points whose both drug component is a test drug and target component is a test target and is used for all 20 learning problems. Separate training sets are then selected for the four different OTS cases so that $\mathcal{D}_{\bm{z}^{\textnormal{Test}}} \subseteq \mathcal{D}_{\bm{z}^{\textnormal{Train}}}$ and $\mathcal{T}_{\bm{z}^{\textnormal{Test}}} \subseteq \mathcal{T}_{\bm{z}^{\textnormal{Train}}}$ when the test drugs and targets are ITS drugs and targets, and $\mathcal{D}_{\bm{z}^{\textnormal{Test}}}$ and $\mathcal{T}_{\bm{z}^{\textnormal{Test}}}$ are disjoint from $\mathcal{D}_{\bm{z}^{\textnormal{Train}}}$ and $\mathcal{T}_{\bm{z}^{\textnormal{Train}}}$, respectively, when the test drugs and targets are OTS drugs and OTS targets.

\begin{figure}
    \centering
    \includegraphics{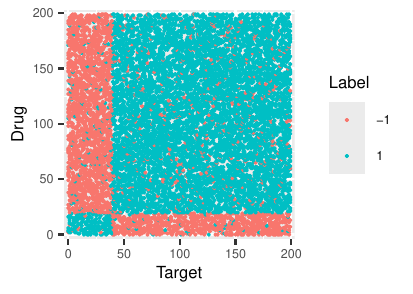}
    \caption{A visual example of imbalanced XOR data.}
    \label{fig:XOR_data}
\end{figure}

With the simulated data, we considered how five simple but representative permutation equivariant  learning algorithms do in terms of the 20 learning problems in Table~\ref{alignment_table}. These methods can also act as useful reference points when comparing learning algorithms' prediction performance for real-world problems. We refer to these learning algorithms as global sum (GS), drugwise sum (DS), targetwise sum (TS), sum of the drugwise and targetwise sums (SS), and product of the drugwise and targetwise sums (PS). The predictors they infer from a sequence of training data can be expressed simply as:
\begin{align*}
\predfun^{\textnormal{GS}}\left(d,t\right) &= \sum_{i=1}^{\arrowvert\bm{z}^{\textnormal{Train}}\arrowvert} y_{i}
\\
\predfun^{\textnormal{DS}}\left(d,t\right) &= \sum_{i=1}^{\arrowvert\bm{z}^{\textnormal{Train}}\arrowvert} y_{i}\cdot\delta[\drug=\drug_i]
\\
\predfun^{\textnormal{TS}}\left(d,t\right) &= \sum_{i=1}^{\arrowvert\bm{z}^{\textnormal{Train}}\arrowvert} y_{i}\cdot\delta[\targ=\targ_i]
\\
\predfun^{\textnormal{SS}}\left(d,t\right)&=\predfun^{\textnormal{DS}}\left(d,t\right)+\predfun^{\textnormal{TS}}\left(d,t\right)
\\
\predfun^{\textnormal{PS}}\left(d,t\right)&=\predfun^{\textnormal{DS}}\left(d,t\right)\cdot\predfun^{\textnormal{TS}}\left(d,t\right)
\end{align*}
where $\delta$ is the indicator function. Using the terminology given per Definition~\ref{constancy_and_linearity}, GS learns a constant function that always predicts the sum of the labels in the entire training data. DS learns a target symmetric function that predicts the sum of the labels in a subset of the training data containing only the triplets whose drug component was the same as for the test pair. TS learns analogous drug symmetric predictors. The DS and TS were then used as components of the other two methods. SS learns additively separable predictors and PS learns nonadditive ones.

As an additional point of reference, we evaluated second order polynomial regression (PR) based method with one-hot encoding of the drug and target indices, practically making it permutation equivariant. %as feature representation, second degree polynomials and one iteration of rank-one sub-problems by
PR is based on the algorithm described in Section~\ref{sec:PR}.
%This ensures that the polynomial regression model can express XOR function but has no side information usable for generalizing to IDOT, ODIT or ODOT data. 
Note also that, while the five simple learning algorithms are deterministic, the training algorithm for PR is a randomized one.

To estimate the learning algorithms' prediction performance for the 20 considered learning problems, we used in each of the 100000 repetitions roughly a quarter of the data to form the training data and another quarter to form the test data, such that they fulfill the equations (\ref{cvestimator}) and (\ref{samplesplit}). The performance estimates on test data were averaged over the repetitions. The results (see Figure \ref{fig:XOR_results}) are as expected by Proposition~\ref{alignmentproposition}. %Performances were measured by \Aindex, \Cindex (global, drugwise and targetwise) and accuracy. Average values and 95~\% credible intervals were calculated over the repetitions of the simulation. 
The 95~\% credible intervals show that, as expected, the prediction performances of the deterministic learning algorithms are always exactly 0.5 whenever they are badly aligned with the learning problem as per Proposition~\ref{alignmentproposition}. The mean prediction performance of PR is also roughly 0.5 for the problems it is badly aligned with, but the credible interval may stretch even up to 0.6 due to PRs randomized nature.

Better than random average \Aindex values are obtained only with PS and PR, because they are the only learning algorithms able to infer nonadditive predictors. Moreover, this takes place only on IDIT data, because no side information beyond the categorical drug and target identities is available. Better than random \CDindex can be obtained on IDIT and ODIT data by the methods able to model target main effects on them, namely TS, SS, PS and PR on the former and TS, SS and PR on the latter. The results are analogous for \CTindex. Better than random expected \Cindex can be obtained whenever either drug or target main effects can be modeled, which is for permutation equivariant algorithms impossible only for ODOT data. Finally, better than random binary classification accuracy can be obtained whenever the learning algorithm can model the grand mean. Overall, the results demonstrate that for all considered performance measures except \Aindex, fairly trivial methods can achieve good performance simply by modeling grand mean, drug main or target main effects. Further, the results of PR emphasize the well-known risk of getting falsely promising results with randomized learning algorithms on too small test data, even if their expected prediction performance would be 0.5.

\begin{figure}
    \centering
    \includegraphics{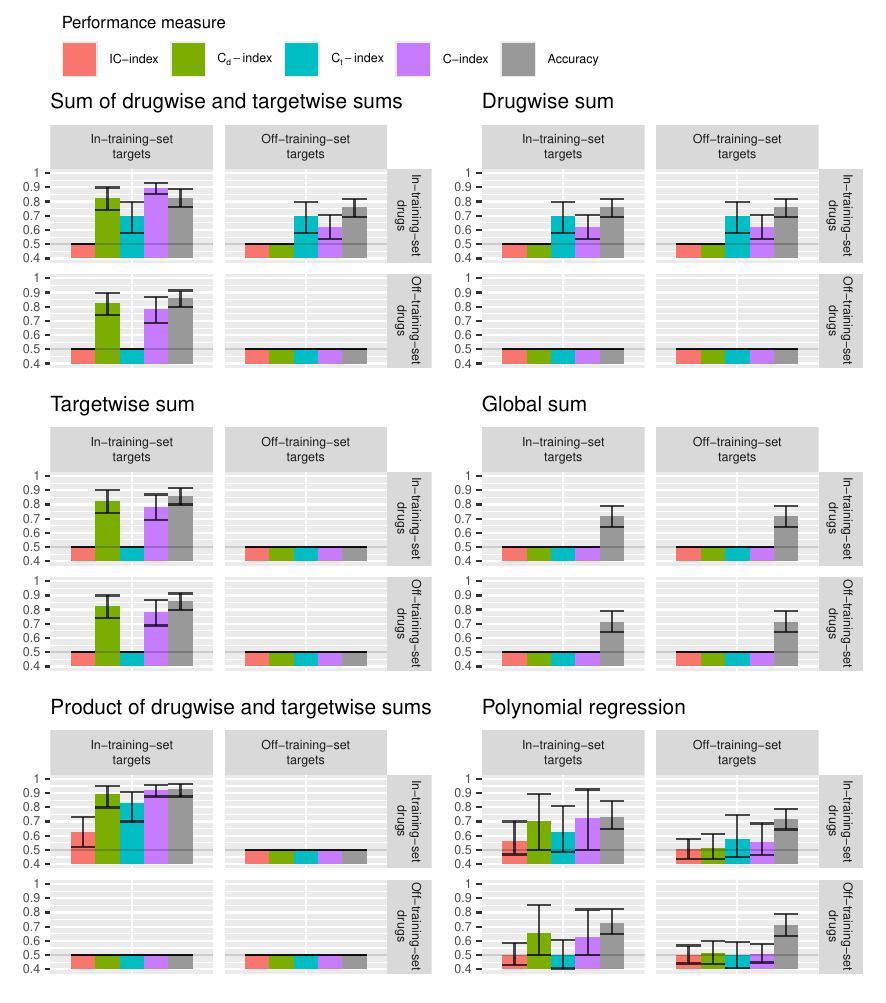}
    \caption{Average prediction performances on test data and symmetric 95 \% credible intervals over $10^6$ repetitions using imbalanced XOR data for the six considered learning algorithms.
    }
    \label{fig:XOR_results}
\end{figure}

\subsection{Data sets}\label{sec:datasets}

Real world DTA prediction experiments were run on seven drug-target data sets, whose characteristics, namely the numbers of drugs $\drsize$, targets $\tasize$, known DTA values $\trsize$, densities (den. \%)) and types of DTA values ( either continuous or binary), are presented in Table~\ref{tab:data_characteristics}. Each data consists of three matrices: feature matrices for drugs and targets as well as a matrix containing the DTA values. The feature matrices represent varying types of similarities between the elements within their respective domains, and thus are matrices of sizes ${\drsize \times \drsize}$ and ${\tasize \times \tasize}$. The DTA value matrix is of size $\drsize \times \tasize$.
For some of the data sets, all DTA values are known (i.e., density is 100\%), while others may only have a small subset of them available. Next, we provide a more detailed description of each data set. 

\begin{table}[h]
\centering
{\small
    \centering 
    \begin{tabular}{lrrrrl} \hline
    \hline \\ [-2.0ex]
        {\bf Data set name} & $\drsize$ & $\tasize$ & $\trsize$ & den. \% & {\bf DTA type}\\ \hline
        Davis & 68 & 442 & 30 056 & 100& Continuous \\
        Metz & 1 421 & 156 & 93 356 & 42& Continuous \\
        KIBA & 2 111 & 229 & 118 254 & 24& Continuous \\
        Merget & 2 967 & 226 & 167 995 &25& Continuous \\
        GPCR & 223 & 95 & 21 185 & 100 & Binary \\
        Ion Channels & 210 & 204 & 42 840 & 100& Binary \\
        Enzymes & 445 & 664 & 295 480 & 100& Binary \\\hline
\hline \\
    \end{tabular}
    \caption{Data set characteristics.}
\label{tab:data_characteristics}
    }
\end{table}

Data sets, that we call here shortly as Davis and Metz, are biochemical selectivity assays for clinically relevant kinase inhibitors by \citet{davis2011comprehensive} and \citet{metz2011navigating}, respectively. In these kinase disassociation constant (Davis) and kinase inhibition constant (Metz) data sets, the smaller the bioactivities, the higher the affinity between the chemical compound (drug) and the protein kinase (target). The drug feature matrices are based on the chemical properties of the drug compounds, and they contain structural fingerprint similarities between two drugs computed with 2D Tanimoto coefficients. The target feature matrices are based on genomic data, and they contain the normalized version of the Smith-Waterman scores \citep{SmithWaterman1981} between two targets. The DTA values represent dissociation constants in Davis data and inhibition constants in Metz data.

The kinase inhibitor bioactivity data set (KIBA) introduced by \citet{tang2014making}, integrates 
information captured by various bioactivity types, like IC50, kinase inhibition constant, and kinase disassociation constant, from multiple databases into a bioactivity matrix of 52 498 chemical compounds and 467 kinase targets, including 246 088 observations. Similarly to \citet{he2017simboost}, we only consider drugs and targets with more than ten DTA observations from the original data set, resulting in a data set of 2111 drugs and 229 targets with 24\% density. %The binding affinities in this matrix are used as labels while drug and target feature matrices are given similarly to Davis and Metz.

Data set, that we refer shortly as Merget, is a comprehensive kinome-wide drug–target binding affinity map originally generated by \citet{merget2017profiling}. 
In our experiment, we use it in the updated form described by \citet{cichonska2018learning}. Its DTA values are created by processing the affinity values from original Merget and updating them with the ChEMBL bioactivities by \citet{sorgenfrei2018kinome}. Since the original map is extremely sparse, it only involves drugs with at least 1\% of measured bioactivity values across the kinase panel, and also kinases with kinase domain and ATP binding pocket amino acid sub-sequences available in PROSITE \citep{Sigrist2013}, resulting in 2967 drugs, 226 protein kinases and 167 995 binding affinities. Feature matrices are selected from the sets of kernels computed by \citet{cichonska2018learning}. The feature matrix for drugs contains 1024-bit fingerprint based on the shortest paths between atoms, taking into account ring systems and charges, and the feature matrix for targets contains amino acid sub-sequences of ATP binding pockets and amino acid properties.

Finally, we applied the widely used binary DTA data sets GPCR, Ion Channels, and Enzymes, comprising compounds targeting pharmaceutically relevant proteins \citep{yamanishi2008prediction}. The DTA values within these datasets is obtained from KEGG BRITE, BRENDA, SuperTarget, and DrugBank databases, resulting in binary DTA matrices. Compound similarity scores, used as a feature matrix for drugs, are computed using the SIMCOMP score \citep{Hattori2003}, while protein sequence similarity scores are computed using the normalized Smith-Waterman score.

\subsection{Cross-validation}\label{sec:crossvalidation}
We applied the following variants of a 9-fold cross-validation approach for our experiments. First, both the drugs and targets were split into three groups. Then, the nine folds are formed from all combinations of the drug and target groups. In the cross-validation, each fold is used once as a test set. During each round, training data is formed from the other folds so that it fulfills the learning problem specific condition as per Definition~\ref{settingrestrictionsdef}. We also form a separate validation data for hyper-parameter selection so that it, together with training data, fulfills the same condition as the train-test split. We selected the hyper-parameter values using squared error as a surrogate for all considered base utilities, since using the utilities directly made no practical difference. The predictions for the test set are obtained with the predictor learned on the training set, which performed the best in the validation phase. We compute the performance measures for the test folds and average the fold-wise values.

\subsection{Methods}\label{sec:methods}
Making use of the proposed \Aindex, as well as the different variants of C-index, we investigate how different types of well-known machine learning algorithms can capture interaction effects in practical DTA prediction problems. Next, we give a more detailed description of the methods and their hyper-parameters.

\subsubsection{Pairwise kernel ridge regression}

As the first set of learning algorithms, we use the following variants of the pairwise kernel ridge regression method \citep{viljanen2021generalized}. Pairwise kernels can be considered as similarities between drug-target pairs. Here, we evaluate two pairwise kernels, namely an additive (linear pairwise kernel) and multiplicative (Kronecker product pairwise kernel) combination of two domain kernels. By domain kernel, we refer to a kernel function over drugs or over targets. We applied both linear and Gaussian RBF kernels as drug and target domain kernels, so that we have the following four variations. LR(linear), LR(Gaussian), KR(linear) and KR(Gaussian), where LR and KR refer to linear and Kronecker pairwise kernel ridge regression, respectively, and (linear) and (Gaussian) to the two domain kernels. As the description suggest, ridge regression with linear pairwise kernel (i.e. both LR(linear) and LR(Gaussian)) cannot model interaction effects, because the inferred predictors necessarily consists only of the drug main, target main and constant terms in the decomposition (\ref{eqn:decomposition}), even if the domain kernel is Gaussian. In contrast, both methods using the Kronecker product pairwise kernel can also capture the interaction effects.

We fit the pairwise kernel ridge regression models using the CGKronRLS solver from the RLScore software library (version 0.8.1.) \citep{pahikkala2016rlscore}.  With the Gaussian RBF kernels, the kernel width parameter $\mu=10^{-5}$ was used as recommended by \citet{airolapahikkala2018}. Maximum number of iterations was set to 1~000. Early stopping with a lag of 50 iterations was used to speed up the calculations and avoid overfitting. In other words, if the performance on validation data was not improved over the latest 50 iterations, the execution was terminated even though the maximum number of iterations was not reached yet. Regularization parameter was selected from $\left\lbrace 2^{-10}, 2^{-5}, 2^{-4}, 2^{-3}, 2^{-2}, 2^{-1}, 2^{0}, 2^{1}, 2^{2}, 2^{3}, 2^{4}, 2^{5}, 2^{10}\right\rbrace$.

\subsubsection{Polynomial regression} \label{sec:PR}
In polynomial regression the feature vector contains the linear regression part and all possible second and higher order polynomial terms of the drug and target features. The more higher order terms there are, the more complex interactions can be learned. We used latent tensor reconstruction based method \citep{szedmak2020solution} to learn the predictor, with the implementation from the Multiview (0.12.1) library. The method solves an optimization problem similar to kernel ridge regression. We chose to use second degree polynomials (parameter order = 2), indicating that there are two vectors of parameters whose values are optimized in each rank-one sub-problem. Parameter rank is related to regularization by denoting the number of rank-one problems to be solved, and was selected from a set $\left\lbrace 10, 20, 30, 40, 50, 60, 70, 80 \right\rbrace$.

\subsubsection{k-Nearest Neighbors}\label{sec:knn}

The $k$-Nearest Neighbors regressor \citep{FixEvelyn1951DA-N}, is a supervised learning algorithm used for predicting continuous outcomes. Its prediction for a new datum's outcome is the weighted average of the outcomes of the $k$ nearest training data, providing a simple yet effective method for regression tasks. For this algorithm the drug and target features were concatenated into one feature vector. 

We trained the $k$-nearest neighbors regressor using the scikit-learn \citep{Pedregosa2011scikit} implementation (version 1.0.2). We chose the default Euclidean distance, the neighbors were uniformly weighted, and we searched through a range of different values of neighbors from $\{5,10,30,50,75,100\}$. 

\subsubsection{Random Forest}\label{sec:rf}
Random forest is an ensemble learning method that constructs multiple decision trees during training and outputs the mode of the classes for classification or the mean prediction for regression. It tries to remedy the overfitting problem that single decision trees are prone to by combining the predictions from multiple trees. Random forest takes advantage of bootstrap aggregating (bagging) described by \citet{breiman1996bagging}, where multiple subsamples are drawn from the training set with replacement. Random forest may also take advantage of feature bagging described by \citet{HoTinKam1998Trsm}.

We used the scikit-learn implementation of random forest (version 1.0.2), which trains the individual trees using the CART method described by \citet{alma993704525405971}. We drew subsamples of the same size as the training set, used all of the features for each tree, and tried the total number of 100, 200, and 300 trees in our experiments.

\subsubsection{Extreme Gradient Boosting}\label{sec:xgboost}
Extreme gradient boosting \citep{ChenG16} belongs to the gradient boosting framework where multiple weak models are combined iteratively to form a stronger model. On each iteration, another weak model is fitted to the residual of the previous model in an attempt to correct its errors. In particular, XGBoost  \citep{ChenG16} is a gradient boosting library that implements decision tree boosting. We trained the XGBoostRegressor from the  xgboost (1.7.4) library with the default squared loss, trying 100, 125, and 150 learners.

\subsubsection{Feedforward Neural Networks}\label{sec:neuralnets}
A feedforward neural network is a fundamental architecture in machine learning, where information flows unidirectionally from input to output layers. We trained the standard feedforward network with dropout regularisation using the Adam optimizer to serve as a base model, making use of Keras and Tensorflow libraries. The optimal hyper-parameter combination for the network that achieved the best performance on the validation set was selected by performing a grid search over the following parameter ranges: the number of layers $\left\lbrace 2, 3\right\rbrace$, the dropout ratio from $\left\lbrace 0.05, 0.1, 0.2, 0.25\right\rbrace$, the number of epochs $\left\lbrace 1,..,200\right\rbrace$, the batch size from $\left\lbrace 64, 256\right\rbrace$, the learning rate from $\left\lbrace 0.005, 0.001\right\rbrace$, and the number of neurons in each layer for GPCR, Ion Channels and Enzymes, from the set of $\left\lbrace (1024,1024,512) ,(512,512,256) \right\rbrace$ and for Davis, Mets, KIBA and Merget from $\left\lbrace (2048,2048,1024,512) ,(1024,1024,512,256),(512,512,256,128) \right\rbrace$.

\subsubsection{DeepDTA and GraphDTA}

DeepDTA by \citet{Ozturk2018deepdta} and GraphDTA by \citet{Nguyen2020graphdta} are deep learning methods specially developed for predicting DTAs. Instead of utilizing drug-drug and target-target similarities, they use SMILES strings collected from Pubchem and protein sequences obtained from UniProt. These strings and sequences do not directly exist for all data sets in Table~\ref{tab:data_characteristics}. As a consequence, DeepDTA and GraphDTA are only applied to Davis and KIBA data sets.

Both DeepDTA and GraphDTA methods attempt to learn a hierarchical representation of proteins using 1D convolution filters. The methods differ in how they attempt to learn drug representations, DeepDTA uses 1D convolution filters for drugs as well, GraphDTA attempts to improve upon DeepDTA by converting the SMILES strings to graph representations using RDKit, an open-source cheminformatics library, and using various alternative graph convolution methods. For DeepDTA, the searched hyper-parameter range is the same as in the original paper \citep{Ozturk2018deepdta} for all settings. Nevertheless, better performance could be obtained with a different grid search, particularly when either or neither the drugs or/nor the targets are known. GraphDTA has no hyper-parameters to be selected other than the number of trained epochs over a range of 1 to 1000.

\begin{figure}[h]
    \centering
    \includegraphics{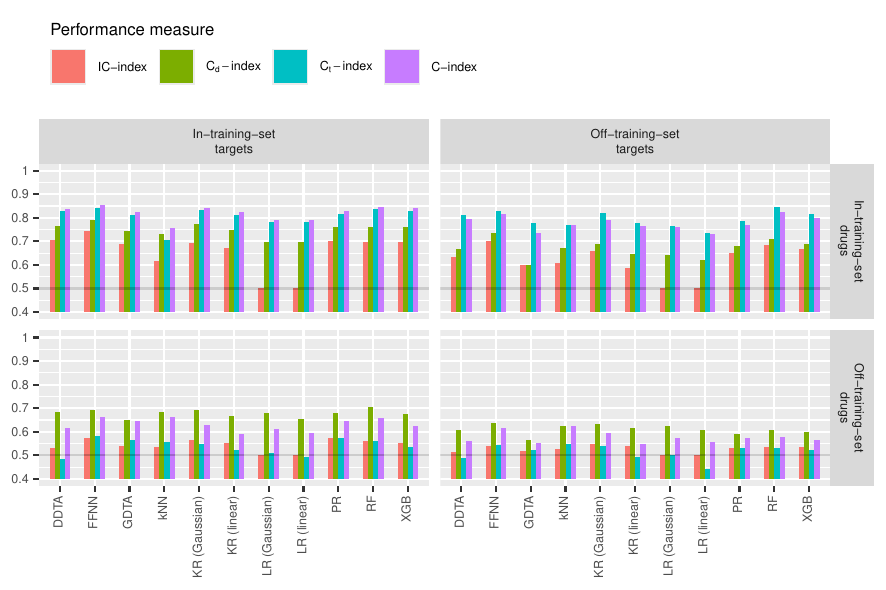}
    \caption{Results for the Davis data set.}
    \label{fig:results_Davis}
\end{figure}

\begin{figure}[h]
    \centering
    \includegraphics{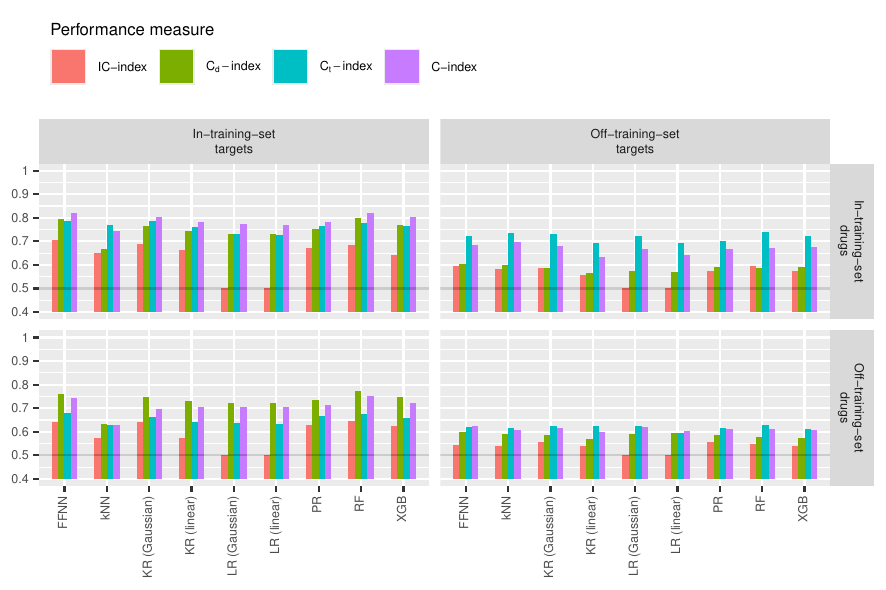}
    \caption{Results for the Metz data set.}
    \label{fig:results_Metz}
\end{figure}

\begin{figure}[h]
    \centering
    \includegraphics{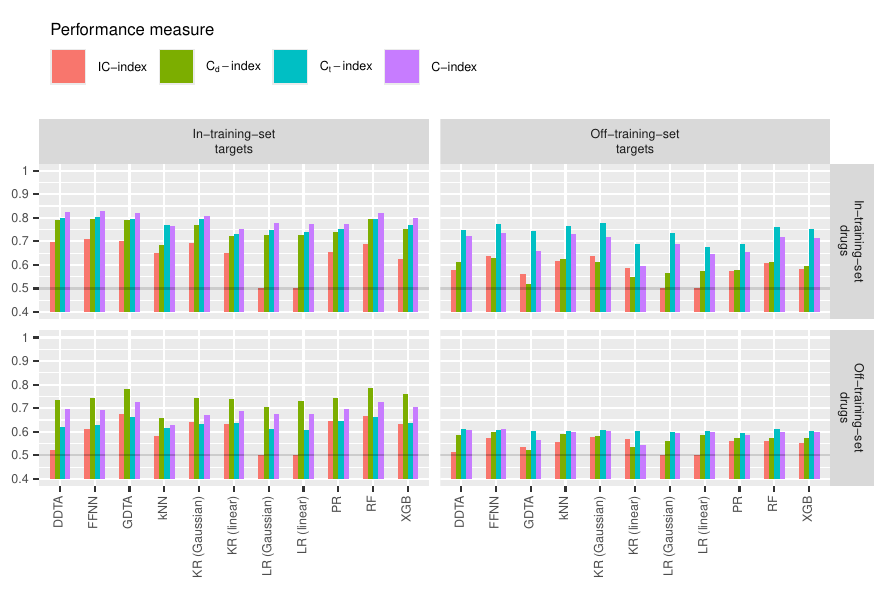}
    \caption{Results for the KIBA data set.}
    \label{fig:results_KiBA}
\end{figure}

\begin{figure}
    \centering
    \includegraphics{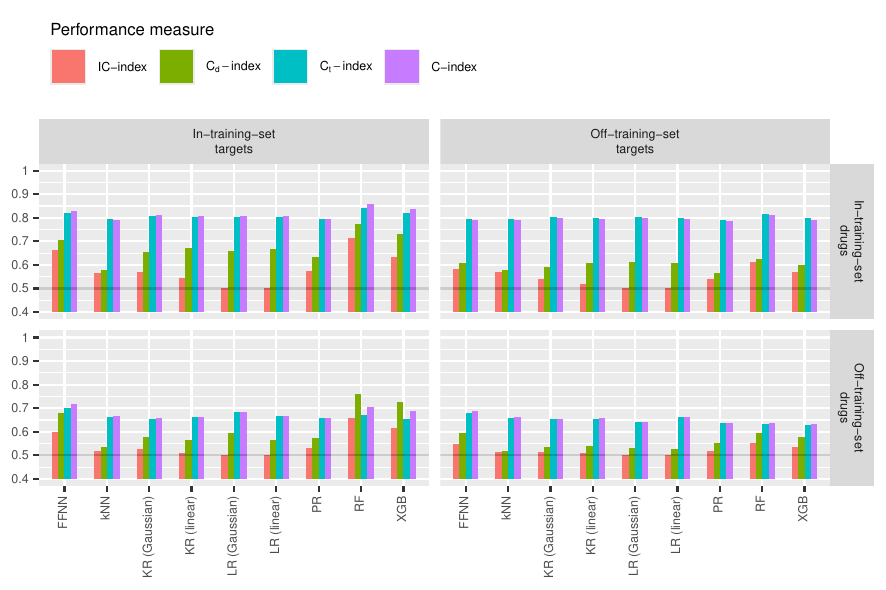}
    \caption{Results for the Merget data set.}
    \label{fig:results_Merget}
\end{figure}

\begin{figure}[h]
    \centering
    \includegraphics{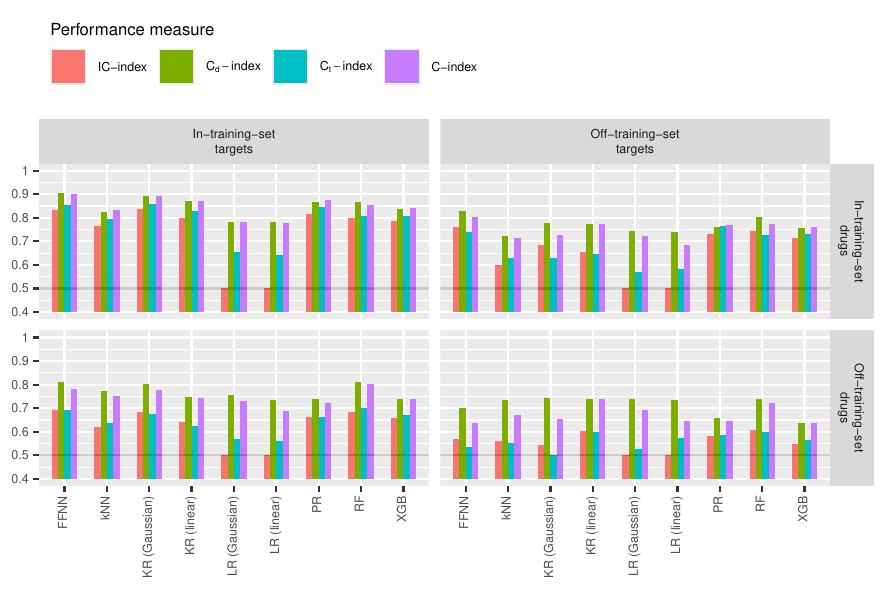}
    \caption{Results for the GPCR data set.}
    \label{fig:results_GPCR}
\end{figure}

\begin{figure}
    \centering
    \includegraphics{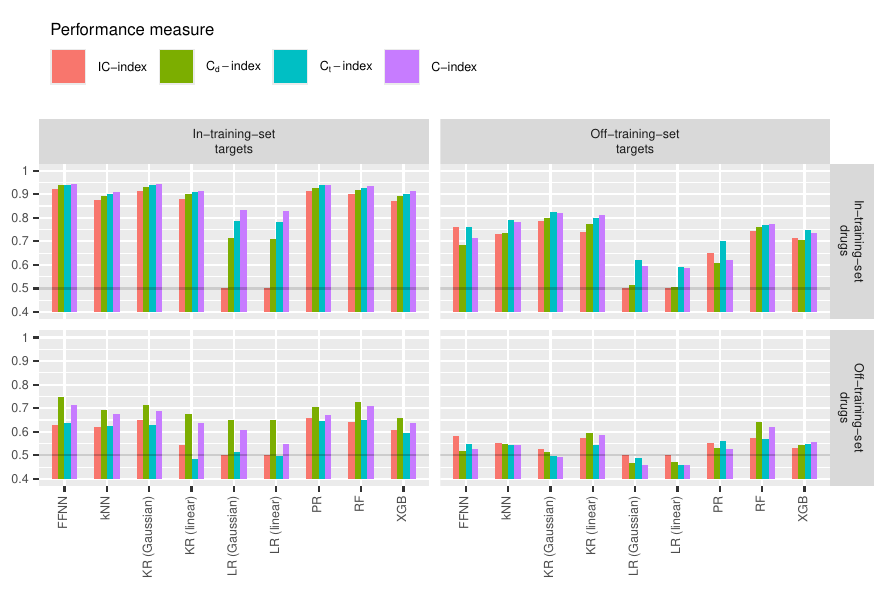}
    \caption{Results for the Ion Channels data set.}
    \label{fig:results_IC}
\end{figure}

\begin{figure}[h]
    \centering
    \includegraphics{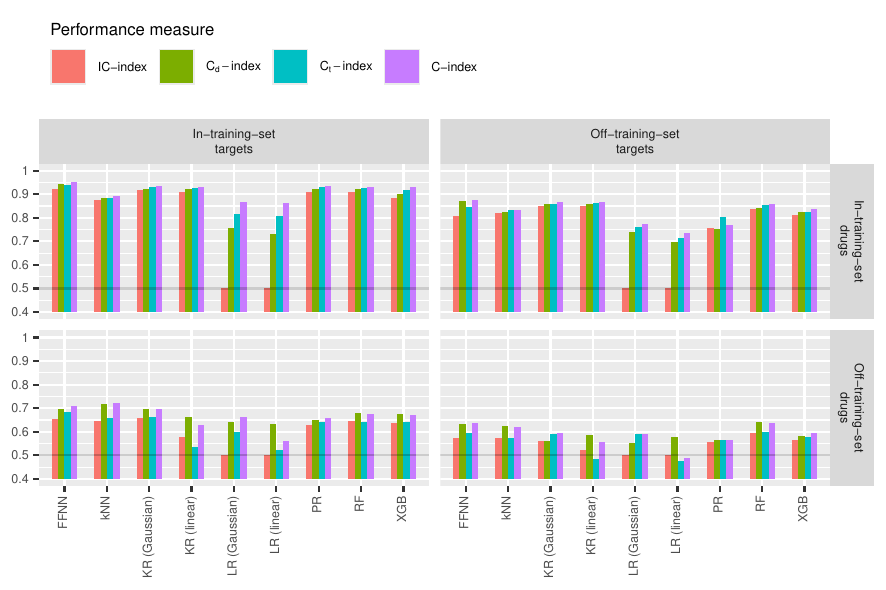}
    \caption{Results for the Enzymes data set.}
    \label{fig:results_E}
\end{figure}

\subsection{Results}\label{sec:results}

The prediction performances were measured by \Aindex, \CDindex, \CTindex and \Cindex. The results for the Davis, Metz, KIBA,  Merget, GPCR, Ion Channels and Enzymes data sets are presented in Figures~\ref{fig:results_Davis}--\ref{fig:results_E}. The methods are abbreviated as DeepDTA (DDTA), feedforward neural network (FFNN), GraphDTA (GDTA), k-nearest neighbors (KNN), pairwise Kronecker kernel ridge regression (KR), pairwise linear ridge regression (LR), polynomial regression (PR), random forest (RF) and XGBoost (XGB). For KR and LR the additional information in parenthesis is the type of domain kernels that were used.

First, we compare how different learning algorithms perform in terms of \Aindex. There is no method that always outperforms the others, but rather relative ranking of the methods depends on the data set and learning problem. Still, overall the basic FFNN is highly competitive, being in several experiments the top performing method (see especially Davis results in Figure~\ref{fig:results_Davis}), and almost always very close to the best methods. Notably, it also outperforms the more complex deep learning methods (DDTA and GDTA, see Figures~\ref{fig:results_Davis} and \ref{fig:results_KiBA}), with the exception of new target prediction on KIBA, where GDTA is the best method. RF is also a top performing method in several of the experiments, outperforming other methods on the Merget data (see Figure~\ref{fig:results_Merget}). KR (Gaussian) is often also among the top performing methods in the experiments (see e.g. Figure~\ref{fig:results_IC}), typically outperforming KR (Linear). The PR and XGB are also competitive, whereas kNN tends to have lower performance. LR (Gaussian) and LR (Linear) have always 0.5 \Aindex, as they are incapable of modeling the interaction effect.

In terms of the \Cindex based measures, the most notable difference is for the LR (Gaussian) and LR (Linear) methods. For the Ion Channels and Enzymes data sets (see Figures~\ref{fig:results_IC} and \ref{fig:results_E}) the methods inability to model the interaction effect also leads to much lower \Cindex compared to the other methods.
However, for the other data sets LR (Gaussian) and LR (Linear) tend to have only slightly lower performances than for the other methods. As an example, for Davis, Merget and Metz data sets, the LR methods outperform kNN on IDIT data in terms of C-index, even though the latter can capture the interaction effect and has higher than random \Aindex. For the other methods, the relative performance differences between them are roughly similar to those with \Aindex, with FFNN, RF, and often also KR (Gaussian) being the best performing methods.

Further, there are a number of trends that hold over all the data sets. First, prediction performances are the highest on IDIT data and the lowest on ODOT data. They are also often higher when generalizing to new targets than when generalizing to new drugs. Further, they are higher for data sets with binary valued outputs than for those with continuously valued ones, especially so on IDIT data. Secondly, \CTindex values are usually higher than \CDindex values when the drugs are known and the targets are new, and vice versa. This is consistent with the simulation results where no side information was used (see Figure~\ref{fig:XOR_results}), and hence \CDindex was on random level on IDOT data and likewise \CTindex on ODIT data.

\section{Discussion and conclusion}

In this work we introduced \Aindex, an estimator of interaction directions' prediction performance. For predictors, \Aindex calculates the fraction of correctly predicted interaction directions over all $2\times2$-designs on a given data. In other words, it is designed to assess whether a predictor truly goes beyond modeling the simple additive main effects of the two interacting objects under consideration and captures non-additive interactions between them. \Aindex is shown to be invariant to the grand mean and the main effects, and hence is genuinely a function of interaction effects only.

To shed some light on how capturing the interaction effects can be useful, say, in making allocation decisions of limited resources, we first consider more in detail the ``public health argument'' (see e.g. \citep{vanderweele2014tutorial}) and its variations mentioned in Section~\ref{sec:intro}. Let us interpret drug's affinity with target as the probability of curing a disease with the drug. Consider two different variants of a disease for which there exists an efficient but expensive drug with a limited number of available doses and a cheap but less efficient alternative whose doses are in abundance. Assume further that the probabilities of curing the two variants with the cheap drug are $1/10$ and $2/10$, and those of the expensive drug are $3/10$ and $5/10$. This forms a $2\times 2$-design of the two variants and two drugs that involves an interaction effect on the additive scale. The expected number of cures is maximized by allocating all available doses of the expensive drug for the second disease.

For learning algorithms, we presented variations of repeated hold-out based estimators of their expected \Aindex when both training and test data are randomly drawn. Similarly to the above described predictors, these estimators assess whether the learning algorithm under consideration truly captures the nonadditive interaction effects among new data not seen during training phase. In our experiments and in our prior work \citep{viljanen2021generalized}, we have shown that some learning algorithms, such as the ones inferring linear models, can achieve a high concordance index even without being able to model the interactions at all, resulting to the issues demonstrated by the above examples. Namely, if the predictor is additive with respect to the effects of the drugs and targets, resource allocation in the sense of the above example is not possible based on the predicted affinities. Moreover, as also discussed in Section~\ref{sec:intro}, such predictors always indicate the existence of a universal drug, the best cure for all diseases. This further underlines the need for the complementary \Aindex for which these predictions would have been completely invisible.

In particular, we presented distinct learning algorithms' prediction performance estimators indicating how well they generalize to drug-target pairs of which both the drug and target, only drug, only target or neither, have any known affinity values in the training data. %formed from in-training-set drugs and targets, in-training-set drugs and off-training-set targets, off-training-set drugs and in-training-set targets as well as off-training-set drugs and targets.
We show that if the learning algorithm under consideration is permutation equivariant with respect to drugs' and targets' categorical identities, then it can only learn to capture interaction effects for drug-target pairs, for which both drug and target have known affinities in the training data. %predict to learning algorithm's interaction predictions involving off-training set drugs or targets. Thus, the permutation equivariant learning algorithms can only learn to capture interactions for pairs of in-training-set object pairs, that is, object pairs whose both components have some known affinity values with some other object in the training data. 
Its practical consequence is the necessity of having side information on the objects beyond their distinct categorical identities, if the intent is to also capture interactions between new objects with no known affinity values in the training data. %requires side information on them.
%encoded into learning algorithm's inductive bias. 
Useful side information is often incorporated through feature representations of drugs and targets, such as their chemical structure.

We next mention a couple of imminent directions of further research. As noted by various authors (see e.g. \citet{vanderweele2014tutorial,Bours2021tutorial,Spake2023itdepends} and references therein), the concept of interaction is scale dependent. Within the additive scale considered in this paper, interaction is considered to take place if the affinities can not be expressed solely as sums of the main effects and grand mean. In contrast, interaction in multiplicative scale would indicate the quantity value's divergence from the products of the main effects and grand mean (see e.g. \citet{Ronkko2022eight}). Further, what is popularly referred to as the odds scale, is often used with binary valued quantities. Being able to capture interactions would enable the detection of some classical but on a first sight counterintuitive phenomena, such as the Simpson's paradox (see e.g. \citet{slavkovic2009algebraic,Norton2015SimpsonsParadox}). For example, if a drug appears to have a larger probability of curing two diseases than a second drug based on their aggregated success/failure counts, the second drug may have a larger success rate for both diseases when they are considered separately. In other words, the order gets reversed, when a confounding variable ``disease'' and its interaction with the ``drug'' variable is accounted. 

Since IC-index is defined as the fraction of correctly predicted directions of interaction on a sequence of data, another suite of prediction performance estimators could be defined for assessing how well the magnitudes of interactions (see e.g. \citet{vanderweele2019interaction}) are captured. By replacing the Heaviside function based utilities and prediction performance estimators considered in this paper with, say, squared error based ones, one obtains their regression error counterparts. While this type of loss functions have some history in the development of ranking algorithms \citep{werner2022review}, we are not aware of their use in interaction prediction performance estimation. 

\section*{Acknowledgments}

The authors would like to thank Oskari Heikkinen for his contributions to writing code for the experiments, and CSC – IT Center for Science, Finland, for generous computational resources. This work has been supported by Research Council of Finland (grants 340140, 340182, 345804, 345805 and 358868).

\bibliographystyle{unsrtnat} 
\bibliography{references}

%\newpage
%\section*{Supplementary materials}

\end{document}